\newcommand{\mb}{\mathbf}
\newtheorem{theo}{\textsc{Theorem}}
\newtheorem{lemma}{\textsc{Lemma}}
\newtheorem{proof}{\textsc{Proof}}
\title{Reconciled Polynomial Machine: A Unified Representation of Shallow and Deep Learning Models}
\author{Jiawei~Zhang$^\star$, Limeng Cui$^\dagger$, Fisher B. Gouza$^\dagger$\\
$^\star$IFM Lab, Florida State University, FL, USA\\
$^\dagger$University of Chinese Academy of Sciences, Beijing, China\\
jzhang@cs.fsu.edu, lmcui932@163.com, fisherbgouza@gmail.com}
\begin{document}

\maketitle


\vspace{-20pt}
\begin{abstract}
\vspace{-10pt}
In this paper, we aim at introducing a new machine learning model, namely \textit{reconciled polynomial machine}, which can provide a unified representation of existing shallow and deep machine learning models. \textit{Reconciled polynomial machine} predicts the output by computing the inner product of the \textit{feature kernel function} and \textit{variable reconciling function}. Analysis of several concrete models, including Linear Models, FM, MVM, Perceptron, MLP and Deep Neural Networks, will be provided in this paper, which can all be reduced to the \textit{reconciled polynomial machine} representations. Detailed analysis of the learning error by these models will also be illustrated in this paper based on their reduced representations from the function approximation perspective.

\end{abstract}
\vspace{-10pt}
\vspace{-10pt}
\section{Introduction}\label{sec:intro}
\vspace{-10pt}

Generally, the conventional machine learning problems aim at recovering a mathematical mapping from the feature space to the label space. We can represent the unknown true mapping steering the real-world data distribution as $g (\cdot; \mb{\theta}): \mathcal{X} \to \mathcal{Y}$, where $\mathcal{X}$ and $\mathcal{Y}$ denote the feature space and label space respectively, and $\mb{\theta}$ represents the variables in the mapping function. Depending on the application settings, such a mapping can be either a simple or a quite complicated equation involving both the variables and extracted features. 

To approximate such a mapping, various machine learning models have been proposed which can be trained based on a small amount of feature-vector pairs $\mathcal{T} = \{(\mb{x}_i, \mb{y}_i)\}_{i = 1}^n$ sampled from space $\mathcal{X} \times \mathcal{Y}$, where $\mb{y}_i = g (\mb{x}_i; \mb{\theta})$. Formally, we can represent the approximated mapping outlined by the general machine learning models (including deep neural networks) from the feature space to the label space as $f(\cdot; \mb{w}): \mathcal{X} \to \mathcal{Y}$ parameterized by $\mb{w}$. By minimizing the following objective function, the machine learning models can learn the optimal variable $\mb{w}^*$:
\begin{equation}
\mb{w}^* = \min_{\mb{w} \in \mathcal{W}} \mathcal{L} \left( f(\mb{X}; \mb{w}), \mb{y} \right),
\end{equation}
where $\mathcal{L}(\cdot, \cdot)$ represents the loss function of the prediction results compared against the ground truth, and $\mathcal{W}$ denotes the feasible variable space. Terms $\mb{X} = [\mb{x}_1^\top, \mb{x}_2^\top, \cdots, \mb{x}_n^\top]^\top$ and $\mb{y} = [\mb{y}_1^\top, \mb{y}_2^\top, \cdots, \mb{y}_n^\top]^\top$ represent the feature matrix and label vector of the training data respectively.

Existing machine learning research works mostly assume mapping $f(\cdot; \mb{w}^*)$ with the optimal variables can provide a good approximation of the true mapping $g (\cdot; \mb{\theta})$ between the feature and label space $\mathcal{X}$ and $\mathcal{Y}$. Here, variable $\mb{w}^*$ can be either locally optimal or globally optimal depending on whether the loss function $\mathcal{L} \left( f(\mb{X}; \mb{w}), \mb{y} \right)$ is convex or not regarding variable $\mb{w}$. Meanwhile, according to the specific machine learning algorithm adopted, function $f(\cdot; \mb{w})$ usually has very different representations, e.g., weighted linear summation for linear models, probabilistic projection for graphical models, and nested projections for deep neural networks via various non-polynomial activation functions.

In this paper, we will analyze the errors introduced by the learning model $f(\cdot; \mb{w}^*)$ in approximating the true mapping function $g (\cdot; \mb{\theta})$. Literally, we say function $g (\cdot; \mb{\theta})$ can be \textit{approximated} with model mapping $f(\cdot; \mb{w}^*)$ based on a dataset $\mathcal{T}$ iff (1) function $f(\cdot; \mb{w}^*)$ can achieve low \textit{empirical  loss} on the training dataset $\mathcal{T}$, and (2) the \textit{empirical  loss} should also be close to the \textit{expected loss} of $f(\cdot; \mb{w}^*)$ on space $\mathcal{X} \times \mathcal{Y}$ as well. To achieve such objectives, based on different prior assumptions about the distribution of the learning space $\mathcal{X} \times \mathcal{Y}$, the existing machine learning model mapping functions in different forms can usually achieve very different performance. In this paper, we try to provide a unified representation of the diverse existing machine learning algorithms, and illustrate the reason why they can obtain different learning performance.


\vspace{-10pt}
\section{Unified Machine Learning Model Representation}\label{sec:representation}
\vspace{-10pt}


%
%

In the following part of this paper, to simplify the analysis, we will assume the labels of the data instances are real numbers of dimension $1$, e.g., $y_i \in \mathbb{R}$, and the feature vectors of data instances are binary of dimension $d_1$, e.g., $\mb{x}_i \in \{0, 1\}^{d_1}$ (i.e., $\mathcal{X} = \{0, 1\}^{d_1}$ and $\mathcal{Y} = \mathbb{R}$). For the label vectors of higher dimensions and continuous feature values, similar analysis results will be also achieved.

\vspace{-5pt}

\begin{theo}\label{theo:representation}
Based on the space $\mathcal{X} \times \mathcal{Y}$, for any functions $g(\mb{x} ; \mb{\theta})$, where $\mb{\theta} \in \mathbb{R}^{d_2}$ and $\mb{x} \in \{0, 1\}^{d_1}$, they can all be represented as a finite weighted sum of monomial terms about $\mb{x}$ as follows:\vspace{-5pt}\begingroup\makeatletter\def\f@size{8}\check@mathfonts
\begin{equation}
g(\mb{x} ; \mb{\theta}) = \sum_{n = 0}^{d_1} \sum_{1 \le i_1 < i_2 < \cdots < i_n \le d_1} v^{(n)}_{{i_1,i_2, \cdots, i_n}} \cdot  x_{i_1} x_{i_2} \cdots x_{i_n} = \psi(\mb{\theta})^\top \kappa(\mb{x}),
\end{equation} \endgroup

\vspace{-10pt}
\noindent where term $v^{(n)}_{{i_1,i_2, \cdots, i_n}} \in \mathbb{R}$ denotes a weight computed based on $\mb{\theta}$. Functions $\kappa(\cdot): \{0, 1\}^{d_1} \to \mathbb{R}^D$ and $\psi(\cdot): \mathbb{R}^{d_2} \to \mathbb{R}^D$ project the feature and variable vectors to a space of the same dimension, which are called the ``\textit{feature kernel function}'' and ``\textit{variable reconciling function}'' respectively.
\end{theo}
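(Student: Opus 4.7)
The plan is to exploit the fact that the inputs are Boolean: for any $x_i \in \{0,1\}$, we have $x_i^k = x_i$ for all $k \geq 1$. Consequently every polynomial expression in $\mathbf{x}$ collapses to a multilinear polynomial, and any function $g(\cdot; \boldsymbol{\theta}): \{0,1\}^{d_1} \to \mathbb{R}$ — being merely a function on the finite set $\{0,1\}^{d_1}$ of cardinality $2^{d_1}$ — lives in a vector space of dimension $2^{d_1}$. The multilinear monomials $\{\prod_{i \in S} x_i : S \subseteq \{1,\ldots,d_1\}\}$ are precisely $2^{d_1}$ in number, so the core task is to show they span (equivalently, are linearly independent in) this function space.

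First I would establish this spanning property by an explicit expansion. For each $\mathbf{a} \in \{0,1\}^{d_1}$, the function $\mathbb{1}_{\mathbf{a}}(\mathbf{x}) = \prod_{i:\, a_i = 1} x_i \cdot \prod_{i:\, a_i = 0} (1 - x_i)$ is a polynomial that evaluates to $1$ when $\mathbf{x} = \mathbf{a}$ and $0$ otherwise. Hence
\begin{equation}
g(\mathbf{x}; \boldsymbol{\theta}) = \sum_{\mathbf{a} \in \{0,1\}^{d_1}} g(\mathbf{a}; \boldsymbol{\theta}) \cdot \mathbb{1}_{\mathbf{a}}(\mathbf{x}).
\end{equation}
Expanding each $(1 - x_i)$ factor and collecting like monomials (using $x_i^2 = x_i$ to reduce every resulting term) produces a representation of the form $\sum_{n = 0}^{d_1} \sum_{1 \leq i_1 < \cdots < i_n \leq d_1} v^{(n)}_{i_1,\ldots,i_n} \cdot x_{i_1} \cdots x_{i_n}$, where each coefficient $v^{(n)}_{i_1,\ldots,i_n}$ is a specific linear combination of the evaluations $g(\mathbf{a}; \boldsymbol{\theta})$ and hence a well-defined function of $\boldsymbol{\theta}$. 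Uniqueness of the expansion (and linear independence of the monomials) follows immediately: if two multilinear polynomials agreed on every $\mathbf{x} \in \{0,1\}^{d_1}$ they would correspond to the same element of the $2^{d_1}$-dimensional function space, while the dimension count forces the $2^{d_1}$ monomials to be a basis.

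Second I would package the result as an inner product. Fix any ordering of the subsets $S \subseteq \{1,\ldots,d_1\}$ and let $D = 2^{d_1}$. Define
\begin{equation}
\kappa(\mathbf{x}) \;=\; \Big(\, \textstyle\prod_{i \in S} x_i \,\Big)_{S \subseteq \{1,\ldots,d_1\}} \in \mathbb{R}^D, \qquad \psi(\boldsymbol{\theta}) \;=\; \big(\, v^{(|S|)}_S(\boldsymbol{\theta}) \,\big)_{S \subseteq \{1,\ldots,d_1\}} \in \mathbb{R}^D,
\end{equation}
with the convention that the empty product equals $1$. Then $g(\mathbf{x}; \boldsymbol{\theta}) = \psi(\boldsymbol{\theta})^\top \kappa(\mathbf{x})$, which is exactly the claimed \emph{feature kernel} / \emph{variable reconciling} factorization; the signatures $\kappa: \{0,1\}^{d_1} \to \mathbb{R}^D$ and $\psi: \mathbb{R}^{d_2} \to \mathbb{R}^D$ are as advertised.

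I do not expect a serious obstacle, since the argument is essentially a basis expansion for functions on a Boolean cube. The one place requiring a little care is the bookkeeping that turns the indicator-function expansion into the monomial form with coefficients labeled by strictly increasing index tuples — one must combine contributions from multiple $\mathbf{a}$'s and repeatedly apply $x_i^2 = x_i$ to reduce powers. An alternative, perhaps cleaner route is to invoke Möbius inversion over the Boolean lattice to write $v^{(|S|)}_S(\boldsymbol{\theta}) = \sum_{T \subseteq S} (-1)^{|S|-|T|} g(\mathbf{1}_T; \boldsymbol{\theta})$ in closed form, which both exhibits the coefficients explicitly and makes the dependence on $\boldsymbol{\theta}$ transparent.
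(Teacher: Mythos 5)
Your proof is correct, and it takes a genuinely different route from the paper. The paper proves the theorem by Taylor-expanding $g(\mb{x};\mb{\theta})$ at $\mb{x}_0=\mb{0}$ and then arguing that either the expansion terminates (when $g$ has finite polynomial degree) or the identity $x_i^j=x_i$ on $\{0,1\}$ collapses the infinite series into finitely many multilinear monomials; the resulting weights $v^{(n)}_{i_1,\ldots,i_n}$ are then (sums of) partial derivatives of $g$ at the origin. You instead treat $g$ purely as a function on the finite cube and expand it in the indicator basis $\mathbb{1}_{\mb{a}}(\mb{x})=\prod_{i:a_i=1}x_i\prod_{i:a_i=0}(1-x_i)$, with a dimension count giving uniqueness and M\"obius inversion giving the coefficients in closed form, $v_S=\sum_{T\subseteq S}(-1)^{|S|-|T|}g(\mb{1}_T;\mb{\theta})$. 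Your argument is more elementary and strictly more general: it requires no differentiability or analyticity of $g$ and no convergence argument for an infinite expansion (assumptions the paper invokes implicitly and never justifies), and it delivers explicit, $\mb{\theta}$-dependent formulas for the weights together with uniqueness of the representation, which the paper does not establish. What the paper's route buys is the identification of the weights with derivatives of $g$ at $\mb{0}$, which is the form later sections reuse when they compare linear, quadratic, and higher-order models against Taylor remainder terms $R_k(\mb{x})$; your coefficients are instead finite differences over the cube, which serve the theorem itself equally well but would require a small translation to support those later remainder-based error arguments. Both constructions land on the same $D=2^{d_1}$ monomial feature map $\kappa$ and the same inner-product factorization $\psi(\mb{\theta})^\top\kappa(\mb{x})$.
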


\vspace{-5pt}

Here, we need to add a remark, the original feature and variable vectors can be of different dimensions actually, i.e., $d_1 \neq d_2$. There exist various definitions of functions $\kappa(\cdot)$ and $\psi(\cdot)$. For instance, according to the above equation, we can provide an example of these functions as follows, which projects the features and variables to a shared space of dimension $D = 2^{d_1}$: \vspace{-5pt} \begingroup\makeatletter\def\f@size{8}\check@mathfonts
\begin{align}
\psi(\mb{\theta})&= [v^{(0)}; \ v^{(1)}_1, \cdots, v^{(1)}_{d_1}; \ v^{(2)}_{1,2}, \ \ \ \ \ \cdots, v^{(2)}_{d_1-1,d_1}; \ \ \ \ \ v^{(3)}_{1,2,3}, \ \ \ \ \ \ \ \cdots; \ v^{(N)}_{1,2,\cdots,d_1}]^\top,\\
\kappa(\mb{x}) &= [1; \ \ \ \ \ x_1, \ \ \ \cdots, x_{d_1}; \ (x_1x_2), \cdots, (x_{d_1-1}x_{d_1}); \ (x_1x_2 x_3), \cdots; \ (x_1x_2 x_{d_1})]^\top.
\end{align}\endgroup

\vspace{-12pt}

\begin{proof}
Formally, given a mapping function $g(; \mb{\theta})$, the true label $y \in \mathbb{R}$ of data instance featured by vector $\mb{x} = [x_1, x_2, \cdots, x_{d_1}]^\top \in \mathbb{R}^d$ can be represented as $y = g(\mb{x}; \mb{\theta})$. At $\mb{x} = \mb{x}_0$, the mapping function can be represented as the weighted summation of polynomial terms according to the Taylor expansion theorem, i.e.,\vspace{-8pt} \begingroup\makeatletter\def\f@size{8}\check@mathfonts
\begin{align}
g(\mb{x}; \mb{\theta}) &= g(\mb{x}_0; \mb{\theta}) + \hspace{-5pt} \sum_{i_1 = 1}^{d_1} \frac{\partial g(\mb{x}_0; \mb{\theta})}{\partial x_{i_1}} \cdot (x_{i_1} - x_{0,i_1}) + \hspace{-5pt} \sum_{i_1, i_2 = 1}^{d_1} \frac{\partial^2 g(\mb{x}_0; \mb{\theta})}{\partial x_{i_1} \partial x_{i_2}} \cdot (x_{i_1} - x_{0,i_1}) (x_{i_2} - x_{0,i_2}) + \cdots ,\\
&= \sum_{n = 1}^{N} \sum_{i_1, \cdots, i_n = 0}^{d_1} \frac{\partial^n g(\mb{x}_0; \mb{\theta}) }{\partial x_{i_1} \cdots \partial x_{i_n}} (x_{i_1} - x_{0,i_1}) (x_{i_2} - x_{0,i_2}) \cdots (x_{i_n} - x_{0,i_n}),\\
&\overset{\underset{\mathrm{\mb{x}_0 = \mb{0} }}{}}{=} \sum_{n = 0}^{N}  \sum_{i_1, \cdots, i_n = 1}^{d_1} \frac{\partial^n g(\mb{x}_0; \mb{\theta}) }{\partial x_{i_1} \cdots \partial x_{i_n}} x_{i_1} x_{i_2} \cdots x_{i_n}.
\end{align}\endgroup
Here, the key point is whether $N$ is finite or $N$ will be approaching $\infty$. Depending on the highest order of polynomial terms involved in the true mapping $g(\mb{x}_0; \mb{\theta})$, we have the following two cases:
\begin{itemize}
\vspace{-5pt}
\item \textit{case 1}: In the case when the largest order of polynomial term in $g(\mb{x}_0; \mb{\theta})$ is a finite number $k$, it is easy to show that $\frac{\partial^{j} g(\mb{x}_0; \mb{\theta}) }{\partial x_1 \cdots \partial x_j} = 0, \forall j > k$. In other words, we have $N$ to be a finite number as well and $N=k$.
\vspace{-3pt}
\item \textit{case 2}: In the case when derivative of function $g(\mb{x}_0; \mb{\theta})$ exists for any $j \in [1, \infty]$. It seems the equation will have an infinity number of polynomial terms. Meanwhile, considering that $x_i \in \{0, 1\}$, we have the $j_{th}$ power of $x_i$ will be equal to $x_i$, i.e., $x_i^j = x_i$ for any $j \in \{1, 2, \cdots, \infty\}$. In other words, the high-order polynomial term $x_{i_1}^j x_{i_2}^k \cdots x_{i_n}^m = x_{i_1} x_{i_2} \cdots x_{i_n} $ can always hold, which will reduce the infinity number of polynomial terms to finite polynomial terms of $x_1, x_2, \cdots x_{d_1}$ instead, i.e., $N$ is still a finite number.
\vspace{-5pt}
\end{itemize}

Based on the above analysis, we can simplify the above equation as follows  \vspace{-5pt} \begingroup\makeatletter\def\f@size{8}\check@mathfonts
\begin{equation}
\sum_{n = 0}^{N}  \sum_{i_1, \cdots, i_n = 1}^{d_1} \frac{\partial^n g(\mb{x}_0; \mb{\theta}) }{\partial x_{i_1} \cdots \partial x_{i_n}} x_{i_1} x_{i_2} \cdots x_{i_n} = \sum_{n = 0}^{d_1} \sum_{1 \le i_1 < i_2 < \cdots < i_n \le d_1} v^{(n)}_{{i_1,i_2, \cdots, i_n}} \cdot  x_{i_1} x_{i_2} \cdots x_{i_n},
\end{equation}\endgroup

\vspace{-10pt}
\noindent where the weight terms $v^{(n)}_{{i_1,i_2, \cdots, i_n}}$ is the sum of both the $n_{th}$-order derivative value $\frac{\partial^n g(\mb{x}_0; \mb{\theta}) }{\partial x_{i_1} \cdots \partial x_{i_n}}$ as well as even higher order of derivatives, e.g., $\frac{\partial^{n+1} g(\mb{x}_0; \mb{\theta}) }{\partial x_{i_1} \partial x_{i_1} \cdots \partial x_{i_n}}$ and so forth.
\end{proof}

\vspace{-10pt}
\subsection{Approximation Error Analysis}
\vspace{-8pt}

It is similar for the mappings $f(\mb{x}; \mb{w})$ of the machine learning model, which can also be represented as a polynomial summation, i.e., $f(\mb{x}; \mb{w}) = \psi'(\mb{w})^\top \kappa'(\mb{x})$, where $\mb{w} \in \mathbb{R}^{d_3}$, $\psi'(\cdot): \mathbb{R}^{d_3} \to \mathbb{R}^{D'}$ and $\kappa'(\cdot): \{0, 1\}^{d_1} \to \mathbb{R}^{D'}$. Given the learning space $\mathcal{X} \times \mathcal{Y}$, the approximation process of function $f(\mb{x}; \mb{w})$ for true mapping $g(\mb{x}; \mb{\theta})$ generally involves $4$ key factors:
\begin{enumerate}
\vspace{-5pt}
\item dimension of parameter $\mb{w}$: ${d_3}$, 
\vspace{-3pt}
\item objective learning space dimension: $D'$,
\vspace{-3pt}
\item weight reconciling function $\psi'(\cdot): \mathbb{R}^{d_3} \to \mathbb{R}^{D'}$,
\vspace{-3pt}
\item feature kernel function $\kappa'(\cdot): \{0, 1\}^{d_1} \to \mathbb{R}^{D'}$. 
\vspace{-5pt}
\end{enumerate}

If $f(\mb{x}; \mb{w})$ can pick the identical factors as function $g(\mb{x}; \mb{\theta})$, literally $f(\mb{x}; \mb{w})$ will precisely recover $g(\mb{x}; \mb{\theta})$. However, in real applications, precise recovery of $g(\mb{x}; \mb{\theta})$ is usually an impossible task. Meanwhile, according to the Vapnik-Chervonenkis theory \cite{VC15, BEHW89}, for measuring the quality of function $f(\mb{x}; \mb{w})$, we can compute the error introduced by it compared against the true mapping function $g(\mb{x}; \mb{\theta})$ based on the learning space $\mathcal{X}$, which can be represented as\begingroup\makeatletter\def\f@size{8}\check@mathfonts
\begin{equation}
 \begin{matrix} 
 \underbrace{ \int_{\mb{x} \in \mathcal{X}} \hspace{-10pt} p(\mb{x}) \left\| f(\mb{x}; \mb{w}) - g(\mb{x}; \mb{\theta}) \right\| \mathrm{d}\mb{x} } = \hspace{-5pt}
 & \underbrace{ \int_{\mb{x} \in \mathcal{T}} \hspace{-10pt} p(\mb{x}) \left\| f(\mb{x}; \mb{w}) - g(\mb{x}; \mb{\theta}) \right\| \mathrm{d}\mb{x}  }  + \hspace{-5pt}
 &\underbrace{ \int_{\mb{x} \in \mathcal{X} \setminus \mathcal{T} } \hspace{-10pt} p(\mb{x}) \left\| f(\mb{x}; \mb{w}) - g(\mb{x}; \mb{\theta}) \right\| \mathrm{d}\mb{x}  },\\
 \mbox{overall loss } \mathcal{L} & \mbox{empirical loss } \mathcal{L}_{em} &\mbox{expected loss } \mathcal{L}_{ex}
 \end{matrix}
\end{equation}\endgroup
where $p(\mb{x})$ denotes the probability density function of $\mb{x}$ and $\left \| \cdot \right\|$ denotes a norm measuring the difference between $f(\mb{x}; \mb{w})$ and $g(\mb{x}; \mb{\theta})$.

Minimization of term $\mathcal{L}$ is equivalent to the minimization of $\mathcal{L}_{em}$ and $\mathcal{L}_{ex}$ simultaneously. In the learning process, the training data $\mathcal{T}$ is given but we have no idea about the remaining data instances $\mathcal{X} \setminus \mathcal{T}$. In other words, computation of the \textit{expected loss} term $\mathcal{L}_{ex}$ is impossible. Existing machine learning algorithms solve the problem with two-fold: (1) minimization of the \textit{empirical loss} $\mathcal{L}_{em}$, and (2) minimization of the gap between \textit{empirical loss} and \textit{overall loss}, i.e., $| \mathcal{L}_{em} - \mathcal{L} |$. To achieve such an objective, various different machine learning models have been proposed already. In the following section, we will illustrate how the existing machine learning algorithms determine the $4$ factors so as to minimize the model approximation loss terms.

\vspace{-10pt}
\section{Classic Machine Learning Model Approximation Analysis}\label{sec:analysis}
\vspace{-10pt}

In this section, we will provide a comprehensive analysis about the existing machine learning algorithms, and illustrate that they can all be represented as the inner product of the kernel function of features and the reconciling function about variables.

\vspace{-10pt}
\subsection{Linear Model Approximation Error Analysis}
\vspace{-8pt}

At the beginning , we propose to give an analysis of the linear models \cite{CV95, YS09} first, which will provide the foundations for studying more complicated learning models. Formally, given a data instance featured by vector $\mb{x} = [x_1, x_2, \cdots, x_{d_1}]^\top$ of dimension ${d_1}$, based on a linear model parameterized by the optimal weight vector $\mb{w}^* = [w_0, w_1, \cdots, w_{d_3 - 1}] \in \mathbb{R}^{d_3}$, we can represent the mapping result of the data instance as \vspace{-10pt} \begingroup\makeatletter\def\f@size{8}\check@mathfonts
\begin{equation}
\hat{y} = f(\mb{x}; \mb{w}) = w_0 + \sum_{i = 1}^{d_1} w_i \cdot x_i = \psi(\mb{w})^\top \kappa(\mb{x}).
\end{equation}\endgroup

\vspace{-10pt}
\noindent According to the representation, we have the $4$ factors for linear models as: (1) $d_3 = d_1 + 1$, (2) $D' = d_1 + 1$, (3) $\psi(\mb{w}) = \mb{w}$, and (4) $\kappa(\mb{x}) = [1, x_1, x_2, \cdots, x_{d_1}]^\top$. Compared with the true values, we can represent the approximation error by the model for $\mb{x}$ as\vspace{-10pt} \begingroup\makeatletter\def\f@size{8}\check@mathfonts
\begin{align}
\left\| \hat{y} - y \right\| &= \left\| f(\mb{x}; \mb{w}) - g(\mb{x}; \mb{\theta}) \right\|  = \left\| \left( w_0 + \sum_{i = 1}^{d_1} w_i \cdot x_i \right) - \left( g(\mb{0}; \mb{\theta}) + \sum_{i = 1}^{d_1} \frac{\partial g(\mb{0}; \mb{\theta}) }{\partial x_i} x_i + R_2(\mb{x}) \right) \right\|  \\
&= \left\| a + \mb{b}^\top \mb{x}  - R_2(\mb{x}) \right\| .
\end{align}\endgroup

\vspace{-12pt}
\noindent where $a = \big ( w_0 - g(\mb{0}; \mb{\theta}) \big)$ and $\mb{b} = [b_1, b_2, \cdots, b_{d_1}]^\top$ with entry $b_i =  w_i - \frac{\partial g(\mb{0}; \mb{\theta}) }{\partial x_i}$.

Literally, for the linear models, the approximation error is mainly introduced by approximating the high-order remainder term $R_2(\mb{x})$ with the linear function $b + \mb{a}^\top \mb{x}$. In other words, for the linear models, the empirical error term $\mathcal{L}_{em}$ is usually of a large value when dealing with non-linearly separable data instances. Even if $\mathcal{L}_{em}$ can provide a good approximation of the whole error term $\mathcal{L}$, the overall approximation performance will still be seriously bad in such situations. 

\vspace{-10pt}
\subsection{Quadratic Model Approximation Error Analysis}
\vspace{-8pt}

To resolve such a problem, in recent years, some research works propose to incorporate the interactions among the features into the model learning process, and several learning models, like FM (Factorization Machine) \cite{R10}, have been proposed. 

FM proposes to combine the advantages of linear models, e.g., SVM, with the factorization models. Formally, given the data instance featured by vector $\mb{x} = [x_1, x_2, \cdots, x_{d_1}]^\top \in \mathbb{R}^{d_1}$, the prediction label by FM can be formally represented as\vspace{-8pt} \begingroup\makeatletter\def\f@size{8}\check@mathfonts
\begin{equation}
\hat{y} = f(\mb{x}; \mb{w}) = w_0 + \sum_{i = 1}^{d_1} w_i \cdot x_i + \sum_{i=1}^{d_1} \sum_{j = i+1}^{d_1} w_{i,j} \cdot x_i x_j = \psi(\mb{w})^\top \kappa(\mb{x}).
\end{equation}\endgroup

\vspace{-10pt}
\noindent where $\psi(\mb{w}) = [w_0] \sqcup [w_i]_{i = 1}^{d_1} \sqcup [w_{i,j}]_{i, j = 1, i < j}^{d_1}$ denotes the variable vector. Operator $\sqcup$ denotes the concatenation of vectors.

For the data instances featured by vectors of dimension $d_1$, the total number of variables involved in FM will be $1+ d_1 + \frac{d_1 (d_1-1)}{2}$, learning of which a challenging problem for large $d_1$. To resolve such a problem, besides the weights $[w_0] \sqcup [w_i]_{i = 1}^{d_1}$ for linear and bias terms, FM introduces an extra factor vector to define weights in $[w_{i,j}]_{i, j = 1, i < j}^{d_1}$, which can be represented by matrix $\mb{V} \in \mathbb{R}^{d_1 \times k}$. Formally, FM defines the quadratic polynomial term weight as $w_{i,j} = \left \langle \mb{V}[i,:], \mb{V}[j,:] \right \rangle$, where $\mb{V}[i,:]$ and $\mb{V}[j,:]$ are the factor vectors corresponding to the $i_{th}$ and $j_{th}$ feature respectively.

Therefore, for the FM model, we have the $4$ key factors: (1) $d_3 = d_1+1 + d_1 \times k$, (2) $D' = d_1 + 1+\frac{d_1(d_1 - 1)}{2}$, (3) $\psi([w_0] \sqcup [w_i]_{i = 1}^{d_1} \sqcup vec(\mb{V}) ) = [w_0] \sqcup [w_i]_{i = 1}^{d_1} \sqcup [w_{i,j}]_{i, j = 1, i < j}^{d_1}$, and (4) $\kappa(\mb{x}) = [1] \sqcup [x_i]_{i = 1}^{d_1} \sqcup [x_i x_j ]_{i,j = 1, i < j}^{d_1}$. Here, $[w_0] \sqcup [w_i]_{i = 1}^{d_1} \sqcup vec(\mb{V})$ are the variables to be learned in FM.

\vspace{-10pt}
\subsection{Higher-Order Model Approximation Error Analysis}
\vspace{-8pt}

Meanwhile, the recent Multi-View Machine (MVM) \cite{CZLY16} proposes to partition the feature vector into several segments (each segment denotes a view), and consider even higher-order feature interactions among these views into modeling. Formally, we can represent the multi-view feature vector as $\mb{x} = \left[ (\mb{x}^{(1)})^\top, (\mb{x}^{(1)})^\top, (\mb{x}^{(2)})^\top, \cdots, (\mb{x}^{(m)})^\top \right]^\top$, where the superscript denotes the view index and $m$ is the total view number. The prediction result by MVM can be represented as \vspace{-5pt}\begingroup\makeatletter\def\f@size{8}\check@mathfonts
\begin{align}
\hat{y} &=  w_0 \hspace{-2pt} + \hspace{-3pt} \sum_{p=1}^m \sum_{i_p = 1}^{I_p} w_{i_p}^{(p)} x_{i_p}^{(p)} \hspace{-2pt} + \hspace{-3pt} \sum_{p=1}^m \sum_{q = p+1}^m \sum_{i_p = 1}^{I_p} \sum_{i_q = 1}^{I_q} \hspace{-2pt} w_{i_p,i_q}^{(p,q)}  x_{i_p}^{(p)} x_{i_q}^{(q)} \hspace{-3pt} + \cdots +  \hspace{-3pt} \sum_{i_1=1}^{I_1} \hspace{-3pt} \cdots \hspace{-5pt} \sum_{i_m=1}^{I_m} \hspace{-3pt} w_{i1,i2, \cdots, i_m}^{(1, 2, \cdots, m)} \hspace{-2pt} \left( \prod_{p=1}^m x_{i_p}^{(p)} \right) \\
&= \psi(\mb{w})^\top \kappa(\mb{x}).
\end{align}\endgroup
where $I_p$ denotes the feature length of the $p_{th}$ view, i.e., the length of vector $\mb{x}^{(p)}$.

For the higher-order variable, e.g., $w_{i_p,i_q}^{(p,q)}$, MVM also introduces a factorization style method to define the \textit{variable reconciling function} based on a sequence of matrices $\mb{A}^{(p)} \in \mathbb{R}^{(I_p \times k)}$ for the $p_{th}$ view, where $w_{i_1, i_2, \cdots, i_n} = \sum_{j = 1}^k \prod_{p = 1}^n A^{(p)}_{i_p, j}$. Therefore, the key $4$ factors involved in the MVM are as follows: (1) $d_3 = k \sum_{p=1}^m (I_p + 1) = k (d_1 + m)$, (2) $D' = \prod_{i = 1}^m (I_i + 1)$, (3) $\psi(vec(\mb{A}^{(1)}) \sqcup \cdots \sqcup vec(\mb{A}^{(m)}) ) = [w_0] \sqcup [w_{i_p}^{(p)}]_{p=1, i_p = 1}^{m, I_p} \sqcup \cdots \sqcup [w_{ i1,i2, \cdots, i_m }^{ (1, 2, \cdots, m) }]_{i_1, \cdots, i_m = 1}^{I_1, I_2, \cdots, I_m}$, and (4) $\kappa(\mb{x}) = [1] \sqcup [x_{i_p}^{(p)}]_{p=1, i_p = 1}^{m, I_p} \sqcup \cdots \sqcup [\prod_{p=1}^m x_{i_p}^{(p)}]_{i_1, \cdots, i_m = 1}^{I_1, I_2, \cdots, I_m}$.

The FM can be viewed as a special case of the MVM, which involves $d_1$ views denoted by each feature in the vector $\mb{x}$. Furthermore, compared against the output of true models, the error introduced by the MVM (with order $m$) on instance $\mb{x}$ can be represented as\begingroup\makeatletter\def\f@size{8}\check@mathfonts
\begin{align}
\left\| y - \hat{y} \right\| & = \left\| f(\mb{x}; \mb{w}, \mb{V}) - g(\mb{x}; \mb{\theta}) \right\| = \left\| a + \mb{b}^\top \mb{x} + \mb{x}^\top \mb{C} \mb{x} + \cdots  - R_{m+1}(\mb{x}) \right\|,
\end{align}\endgroup
which denotes the error introduced by using $m$-order polynomial equation to approximate the remainder term $R_{m+1}(\mb{x})$. By checking Linear Models, FM, MVM (and other machine learning models, like STM (support tensor machine) \cite{KGP12}), their main drawbacks lie in their lack of ability to model higher order feature interactions. It will lead great \textit{empirical error} in fitting such kinds of functions. In the following section, we will introduce that deep learning models can effectively resolve such a problem, which can fit any functions with any degree of accuracy universally.


\vspace{-10pt}
\section{Deep Learning Model Approximation Error Analysis}
\vspace{-10pt}

\begin{figure*}[!t]
\vspace{-30pt}
\begin{minipage}[b]{0.5\linewidth}
 \centering    
    \includegraphics[width=1.0\textwidth]{./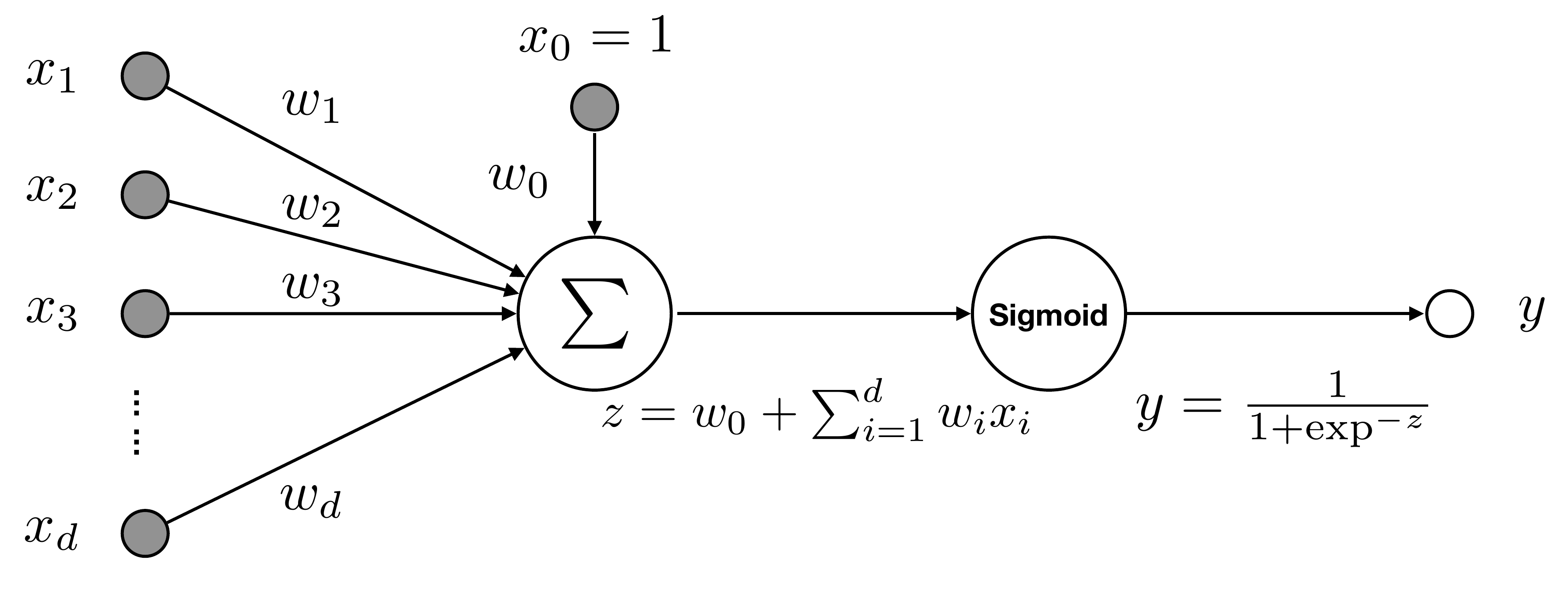}
    \vspace{-15pt}
\caption{Perceptron Model.}\label{fig:shallow}
\end{minipage}
\begin{minipage}[b]{0.5\linewidth}
  \centering
    \includegraphics[width=1.0\textwidth]{./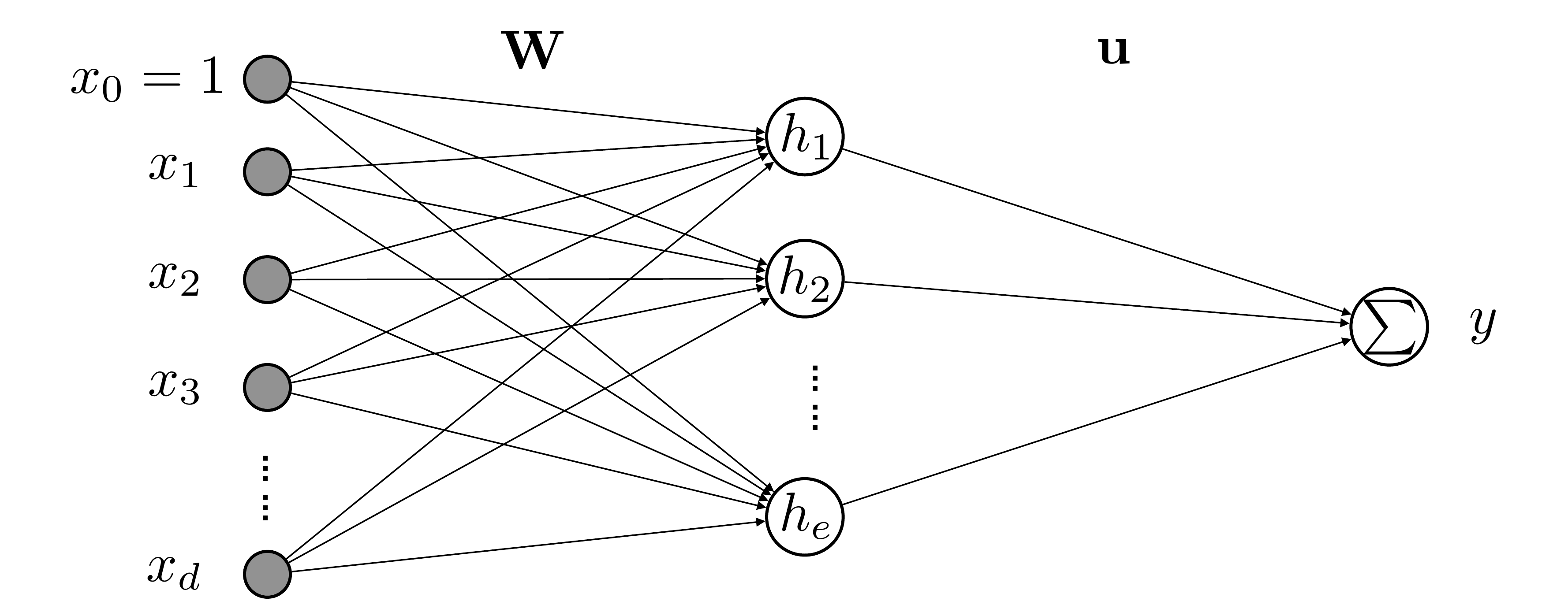}
    \vspace{-15pt}
\caption{Neural Network with Hidden Layer.}\label{fig:shallow_hidden}
 \end{minipage}
 \vspace{-20pt}
\end{figure*}

In this section, we will mainly focus on deep learning models approximation error analysis. At first, we will take perceptron neural network as an example and analyze its introduced approximation errors. After that, we will analyze the deep neural network models with hidden layers.


\vspace{-10pt}
\subsection{Perceptron Neural Network Approximation}
\vspace{-8pt}

As shown in Figure~\ref{fig:shallow}, we illustrate the architecture of a neural network model with a shallow architecture, merely involving the input layer and output layers respectively. The input for the model can be represented as vector $\mb{x} = [x_1, x_2, \cdots, x_{d_1}]^\top$, and terms $\{w_1, w_2, \cdots, w_{d_1}\}$ are the connection weights for each input feature, $w_0$ is the bias term. Here, we will use sigmoid function as the activation function. Formally, we can represent the prediction output for the input vector $\mb{x}$ as \vspace{-5pt} \begingroup\makeatletter\def\f@size{8}\check@mathfonts
\begin{equation}
\hat{y} = f(\mb{x}; \mb{w}) = \frac{1}{1+ \exp^{-(w_0 + \sum_{i=1}^{d_1} w_i \cdot x_i)} }
\end{equation}\endgroup

\vspace{-10pt} 
\noindent Sigmoid function has a good property, which can be expressed with the following lemmas.
\begin{lemma}\label{lemma:derivative}
Let $h\left( y^k (1-y)^l \right) = h_1\left( y^k (1-y)^l \right) + h_2\left( y^k (1-y)^l \right)$, where $h_1\left( y^k (1-y)^l \right) = ky^k(1-y)^{l+1}$ and $h_2\left( y^k (1-y)^l \right) = - ly^{k+1}(1-y)^l$. Given the $(n-1)_{th}$ derivative of $f(\mb{x}; \mb{w})$, i.e., $f^{(n-1)}(\mb{x}; \mb{w})$, the derivative of $f^{(n-1)}(\mb{x}; \mb{w})$ regarding $x_i$ can be represented as \vspace{-5pt} \begingroup\makeatletter\def\f@size{8}\check@mathfonts
\begin{equation}
f^{(n)}(\mb{x}; \mb{w}) = \frac{\partial f^{(n-1)}(\mb{x}; \mb{w})}{\partial x_i} = w_i h\left( f^{(n-1)}(\mb{x}; \mb{w}) \right).
\end{equation}\endgroup
\end{lemma}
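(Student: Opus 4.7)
My plan is to verify the lemma by first establishing the base identity for the sigmoid and then running an induction on $n$, noting that the operator $h$ must be understood as extended to polynomials in $y$ and $1-y$ by linearity.

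First I would record the elementary fact that, writing $z = w_0 + \sum_{j=1}^{d_1} w_j x_j$ and $y = f(\mathbf{x}; \mathbf{w}) = \sigma(z)$, the sigmoid satisfies $\sigma'(z) = \sigma(z)(1-\sigma(z))$, so by the chain rule $\partial y/\partial x_i = w_i \, y(1-y)$. This handles the base case $n=1$ since $f^{(0)} = f = y^1(1-y)^0$ and $\partial y/\partial x_i = w_i\bigl[1\cdot y^1(1-y)^{0+1} - 0\cdot y^{1+1}(1-y)^0\bigr] = w_i\, h(y^1(1-y)^0)$.

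Next I would do the inductive step on a single monomial $y^k(1-y)^l$, which is the only real computation in the lemma. Applying the chain rule once and substituting $\partial y/\partial x_i = w_i\, y(1-y)$ gives
\begin{align}
\frac{\partial}{\partial x_i}\bigl[y^k(1-y)^l\bigr]
&= \bigl[k y^{k-1}(1-y)^l - l y^k (1-y)^{l-1}\bigr]\frac{\partial y}{\partial x_i} \\
&= w_i\bigl[k\, y^k(1-y)^{l+1} - l\, y^{k+1}(1-y)^l\bigr] \\
&= w_i\bigl(h_1(y^k(1-y)^l) + h_2(y^k(1-y)^l)\bigr) = w_i\, h\bigl(y^k(1-y)^l\bigr),
\end{align}
matching the definition of $h$ exactly. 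The inductive hypothesis I would carry along is that $f^{(n-1)}(\mathbf{x}; \mathbf{w})$ is a finite $\mathbb{R}$-linear combination of monomials $y^k(1-y)^l$; the calculation above then shows that its $x_i$-derivative is $w_i$ times the same linear combination of $h(y^k(1-y)^l)$, which is just $w_i\, h(f^{(n-1)}(\mathbf{x}; \mathbf{w}))$ once $h$ is extended linearly. The inductive step is closed because each $h(y^k(1-y)^l)$ is again a linear combination of monomials in $y$ and $1-y$, so the class is preserved.

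The only subtle point, and what I would flag as the main obstacle, is that the lemma as stated defines $h$ only on pure monomials $y^k(1-y)^l$, whereas $f^{(n-1)}$ is in general a linear combination of several such monomials; one must argue (or stipulate) that $h$ is extended by linearity, and that the extension is well defined, i.e., independent of how a polynomial in $y$ is written as a linear combination of monomials in $y$ and $1-y$. Once this bookkeeping is in place, the lemma reduces to the single chain-rule computation displayed above.
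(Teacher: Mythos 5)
Your proof is correct, and in fact the paper offers no proof of this lemma at all---it is stated as a ``good property'' of the sigmoid and used immediately in Lemma~\ref{lemma:derivative2} and Theorem~\ref{theo:expansion}---so your chain-rule computation supplies exactly the argument the paper leaves implicit: $\partial y/\partial x_i = w_i\,y(1-y)$ for $y=\sigma(w_0+\sum_j w_j x_j)$, and then $\partial_{x_i}\bigl[y^k(1-y)^l\bigr] = w_i\bigl(k\,y^k(1-y)^{l+1} - l\,y^{k+1}(1-y)^l\bigr) = w_i\,h\bigl(y^k(1-y)^l\bigr)$. The subtlety you flag (that $h$ is defined only on monomials $y^k(1-y)^l$ while $f^{(n-1)}$ is in general a linear combination of such monomials) is a genuine looseness in the lemma's statement rather than in your argument, and it can be closed cleanly: $h\bigl(y^k(1-y)^l\bigr) = y(1-y)\,\frac{d}{dy}\bigl[y^k(1-y)^l\bigr]$, so the linear extension of $h$ is the operator $p(y)\mapsto y(1-y)\,p'(y)$, which is manifestly well defined independently of how $p$ is decomposed into monomials in $y$ and $1-y$; with that observation your induction (the class of polynomials in $y$ is preserved, and the constant weight factors pass through $h$) is complete and matches what the paper evidently intends.
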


\vspace{-12pt} 

In addition, to simplify the notations, we can use $h^{(n)}(\cdot) = h( h^{(n-1)}(\cdot) )$ to denote the $n_{th}$ recursive application of function $h(\cdot)$, where $h^{(0)}(y^k(1-y)^l) = y^k(1-y)^l$. Therefore, we can have the concrete representation for the $n_{th}$ derivative of function $f(\mb{x}; \mb{w})$ as follows:

\vspace{-5pt} 

\begin{lemma}\label{lemma:derivative2}
The $n_{th}$ derivative of function $f(\mb{x}; \mb{w})$ regarding the variables in $\mathcal{D} = [x_{i_1}, x_{i_1}, \cdots, x_{i_n}]$ (where $|\mathcal{D}| = n$ and $\mathcal{D} \neq \emptyset$) in a sequence can be represented as\begingroup\makeatletter\def\f@size{8}\check@mathfonts
\begin{equation}
\frac{\partial^n f(\mb{x}; \mb{w})}{\partial_{x_i \in \mathcal{D} } x_i} = \prod_{x_i \in \mathcal{D}} w_i \cdot h^{(n-1)}\left( f(\mb{x}; \mb{w}) \cdot (1- f(\mb{x}; \mb{w}))\right).
\end{equation}\endgroup
\end{lemma}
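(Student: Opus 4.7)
The plan is to prove the claim by induction on $n$, using Lemma~\ref{lemma:derivative} as the one-step engine together with a small linearity observation about $h$ that lets us pass from single monomials $y^k(1-y)^l$ to the polynomial expressions in $f$ that appear at higher orders.

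For the base case $n=1$, the claim reduces to $\partial f/\partial x_{i_1} = w_{i_1}\cdot h^{(0)}(f(1-f)) = w_{i_1} f(1-f)$, which is the standard sigmoid derivative and follows by direct differentiation of the closed-form $f(\mb{x};\mb{w})$. For the inductive step, assume the formula holds up to order $n-1$. Writing $F_{n-1} := h^{(n-2)}(f(1-f))$ and $\mathcal{D}' = [x_{i_1}, \ldots, x_{i_{n-1}}]$, the hypothesis reads
$$\frac{\partial^{n-1} f(\mb{x}; \mb{w})}{\partial_{x_i \in \mathcal{D}'} x_i} = \Bigl(\prod_{j=1}^{n-1} w_{i_j}\Bigr)\, F_{n-1}(\mb{x}).$$
Applying $\partial/\partial x_{i_n}$ and pulling the $\mb{x}$-free factor $\prod w_{i_j}$ outside, the goal reduces to showing $\partial F_{n-1}/\partial x_{i_n} = w_{i_n}\cdot h(F_{n-1})$, which by the definition $h^{(n-1)} = h\circ h^{(n-2)}$ is precisely $w_{i_n}\cdot h^{(n-1)}(f(1-f))$ and matches the target formula.

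To justify that intermediate identity I would unfold $F_{n-1}$ as a finite linear combination $\sum_{k,l} c_{k,l}\, f^k(1-f)^l$, obtained by iterating the split $h = h_1 + h_2$ from Lemma~\ref{lemma:derivative} (each application replaces a monomial by two new monomials with coefficients independent of $\mb{x}$), and then differentiate termwise. For each monomial, the chain rule together with $\partial f/\partial x_{i_n} = w_{i_n} f(1-f)$ gives
$$\frac{\partial}{\partial x_{i_n}}\bigl[f^k(1-f)^l\bigr] = w_{i_n}\bigl[k f^k(1-f)^{l+1} - l f^{k+1}(1-f)^l\bigr] = w_{i_n}\cdot h\bigl(f^k(1-f)^l\bigr),$$
so summing over $(k,l)$ and invoking the linearity of $h$ on such sums yields exactly $\partial F_{n-1}/\partial x_{i_n} = w_{i_n}\cdot h(F_{n-1})$, closing the induction. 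The main obstacle is precisely this bookkeeping step: Lemma~\ref{lemma:derivative} is stated for a single monomial argument, and one must verify that its conclusion propagates through the expanding sum of monomials generated by $h^{(n-2)}$. Once that linearity is made explicit, the rest is a routine chain-rule calculation.
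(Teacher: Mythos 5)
Your induction is correct and is essentially the argument the paper intends but never writes out: the paper states Lemma~\ref{lemma:derivative2} as an immediate consequence of Lemma~\ref{lemma:derivative} and the recursive notation $h^{(n)}$, giving no explicit proof. Your added care about extending $h$ linearly from single monomials $f^k(1-f)^l$ to the finite sums produced by iterating $h$ is exactly the bookkeeping the paper's notation leaves implicit, so it strengthens rather than deviates from the intended route.
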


\vspace{-5pt} 

Based on the Lemmas, we can rewrite the approximation function for term $f(\mb{x}; \mb{w})$ at $\mb{x} = \mb{0}$ as the sum of a infinite polynomial equation according to the following theorem.

\vspace{-5pt} 

\begin{theo}\label{theo:expansion}
For the neural network model with sigmoid function as the activation function, its mapping function $f(\mb{x}; \mb{w})$ can be expanded as an infinite polynomial sequence at point $\mb{x}_0 = \mb{0}$: \begingroup\makeatletter\def\f@size{8}\check@mathfonts
\begin{equation}\label{equ:perceptron}
f(\mb{x}; \mb{w}) = v^{(0)} + \sum_{n = 1}^{d_1}  \sum_{1 \le i_1 < i_2 < \cdots < i_n \le d_1} v^{(k)}_{i_1,i_2, \cdots, i_n} \cdot x_{i_1} x_{i_2} \cdots x_{i_n},
\end{equation}\endgroup
where $v^{(0)} = f(\mb{0}; \mb{w})$ and $v^{(n)}_{i_1, \cdots, i_{n}} = w_{i_1} \big( \sum_{p=1}^{n-2} v^{(p)}_{i_1, \cdots, i_p} \cdot v^{(n-1-p)}_{i_{p+1}, \cdots, iå_{n-1}} + (1-2v^{(0)}) \cdot v^{(n-1)}_{i_{2}, \cdots, i_{n}} \big)$.
\end{theo}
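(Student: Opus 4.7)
The plan is to combine Theorem~\ref{theo:representation} with the sigmoid's characteristic multiplicative identity $\partial f/\partial x_{i} = w_{i}\, f(\mb{x};\mb{w})\,(1 - f(\mb{x};\mb{w}))$, which is exactly the $n=1$ case of Lemma~\ref{lemma:derivative} (with $k=l=1$). Theorem~\ref{theo:representation}, applied to $f(\mb{x};\mb{w})$ itself, already guarantees a unique squarefree polynomial representation of the form in Equation~\eqref{equ:perceptron} over $\{0,1\}^{d_1}$, so the entire task reduces to pinning down the scalars $v^{(0)}$ and $v^{(n)}_{i_1,\ldots,i_n}$. Evaluating at $\mb{x}=\mb{0}$ kills every monomial of positive degree, immediately giving $v^{(0)} = f(\mb{0};\mb{w})$ as claimed.

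For the higher-order coefficients I would avoid iterating Lemma~\ref{lemma:derivative2} directly (which forces one to unpack the nested $h^{(n-1)}$ expression symbolically) and instead substitute the unknown polynomial expansion of $f$ into both sides of the sigmoid identity above and equate coefficients. Writing $w_{i_1}\, f(1-f) = w_{i_1}[f - f^2]$, the $-f^2$ piece decomposes as $-(v^{(0)})^2 - 2v^{(0)} \sum_{I \neq \emptyset} v^{(|I|)}_I \prod_{i \in I} x_i - \sum_{I_1, I_2 \neq \emptyset} v^{(|I_1|)}_{I_1} v^{(|I_2|)}_{I_2} \prod_{i \in I_1 \cup I_2} x_i$. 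Combining with the $w_{i_1}\, f$ piece, the $-2v^{(0)}$ contribution merges into a clean prefactor $(1 - 2v^{(0)})$ attached to each lower-order coefficient, which accounts for the $(1-2v^{(0)})\,v^{(n-1)}_{i_2,\ldots,i_n}$ summand in the claimed recursion, while the bilinear piece produces the $\sum_p v^{(p)} v^{(n-1-p)}$ convolution summand. Matching the coefficient of the monomial $x_{i_2}\cdots x_{i_n}$ on both sides of $\partial f/\partial x_{i_1} = w_{i_1} f(1-f)$ and invoking the uniqueness of the squarefree representation then forces the displayed recursion.

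The main obstacle will be bookkeeping the self-convolution from $-f^2$, because the relation $x_i^2 = x_i$ means the monomial $x_{i_2}\cdots x_{i_n}$ on the right can arise from any pair of index sets $(I_1, I_2)$ with $I_1 \cup I_2 = \{i_2,\ldots,i_n\}$, not merely from disjoint partitions of this set. One must collect all such contributions, including those with nonempty overlap, and verify that they reorganize, after the appropriate reindexing, into the ordered convolution $\sum_{p=1}^{n-2} v^{(p)}_{i_1,\ldots,i_p}\cdot v^{(n-1-p)}_{i_{p+1},\ldots,i_{n-1}}$ displayed in the theorem. I would structure the argument as induction on $n$: the base cases $n=1$ and $n=2$ follow from direct differentiation of $f$ at $\mb{0}$, and the inductive step runs the coefficient-matching argument above using the induction hypothesis to supply the form of every lower-order $v^{(p)}$ appearing on the right-hand side. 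Since the recursion expresses each $v^{(n)}_{i_1,\ldots,i_n}$ purely in terms of strictly lower-order coefficients, termination of the induction is automatic.
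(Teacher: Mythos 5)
Your proposal follows essentially the same route as the paper's own proof: apply Theorem~\ref{theo:representation} to $f$ itself, read off $v^{(0)}=f(\mb{0};\mb{w})$ by evaluating at $\mb{x}=\mb{0}$, then substitute the expansion into the sigmoid identity $\partial f/\partial x_{i_1}=w_{i_1}f(1-f)$ (the first-order case of Lemma~\ref{lemma:derivative}) and match coefficients of the squarefree monomials to obtain the stated recursion. Your explicit attention to the self-convolution terms with overlapping index sets (forced by $x_i^2=x_i$) and the framing as induction on $n$ amount to more careful bookkeeping than the paper itself supplies, but the underlying argument is the same.
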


\vspace{-5pt} 

\begin{proof}
According to Theorem~\ref{theo:representation}, the expansion of $f(\mb{x}; \mb{w})$ at $\mb{x} = \mb{0}$ can be represented as \vspace{-8pt} \begingroup\makeatletter\def\f@size{8}\check@mathfonts
\begin{align}
f(\mb{x}; \mb{w}) =v^{(0)} + \sum_{n = 1}^{d_1} \sum_{1 \le i_1 < i_2 < \cdots < i_n \le d_1} v^{(n)}_{{i_1,i_2, \cdots, i_n}} \cdot  x_{i_1} x_{i_2} \cdots x_{i_n}.
\end{align}\endgroup
Next, we will mainly focus on studying the relationship between the weight variables $\{v^{(0)}, v^{(1)}_{i_1}, \cdots v^{(d_1)}_{i_1, i_2, \cdots, i_{d_1}}\}$. Based on the mapping function, we have the concrete representation of the bias scalar terms of the polynomial terms in the expanded equation at expansion point $\mb{x}_0 = \mb{0}$ as $v^{(0)} = f(\mb{0}; \mb{w})$. Meanwhile, for the remaining terms, we propose to compute the derivative of $f(\mb{x}; \mb{w})$ regarding $x_{i_1}$ on both sides: \vspace{-3pt} \begingroup\makeatletter\def\f@size{8}\check@mathfonts
\begin{align}
&\frac{\partial f(\mb{x}; \mb{w})}{\partial x_{i_1}} =  \sum_{n = 1}^{d_1} \sum_{1 \le i_1 < i_2 < \cdots < i_n \le d_1} v^{(n)}_{{i_1,i_2, \cdots, i_n}} \cdot  x_{i_2} \cdots x_{i_n}  \\
&=   v_{i_1}^{(1)} + \sum_{i_2=1}^{d_1} v_{i_1,i_2}^{(2)} x_{i_2} + \sum_{i_2=1}^{d_1} \sum_{i_3=1}^{d_1} v_{i_1, i_2,i_3}^{(3)} x_{i_2} x_{i_3} + \cdots + \sum_{i_2=1}^{d_1} \cdots \sum_{i_{d_1}=1}^{d_1} v_{i_1, i_2, \cdots, i_{d_1}}^{(3)} x_{i_2} \cdots x_{i_{d_1}}.
\end{align}\endgroup
Here, we know that $\frac{\partial f(\mb{x}; \mb{w})}{\partial x_{i_1}} = w_{i_1} h( f(\mb{x}; \mb{w}) )$, therefore we can have $\forall \mb{x} \in \mathbb{R}^{d_1}$,\vspace{-6pt} \begingroup\makeatletter\def\f@size{8}\check@mathfonts
\begin{align}
&v_{i_1}^{(1)} + \sum_{i_2=1}^{d_1} v_{i_1,i_2}^{(2)} x_{i_2} + \sum_{i_2=1}^{d_1} \sum_{i_3=1}^{d_1} v_{i_1, i_2,i_3}^{(3)} x_{i_2} x_{i_3} + \cdots + \sum_{i_2=1}^{d_1} \cdots \sum_{i_{d_1}=1}^{d_1} v_{i_1, i_2, \cdots, i_{d_1}}^{(3)} x_{i_2} \cdots x_{i_{d_1}} \\
&= w_{i_1}  \big( v^{(0)} +  \sum_{ i_1 = 1}^{d_1}  v^{(1)}_{i_1} x_{i_1} + \sum_{i_1 = 1}^{d_1} \sum_{i_2 = 1}^{d_1} v^{(2)}_{i_1 , i_2} \cdot x_{i_1} x_{i_2} + \cdots + \sum_{i_1 = 1}^{d_1} \cdots \sum_{i_{d_1} = 1}^{d_1} v^{(d_1)}_{i_1, \cdots, i_{d_1}} \cdot x_{i_1} \cdots x_{i_{d_1}} \big) \\
& \big( 1- v^{(0)} -  \sum_{ i_1 = 1}^{d_1}  v^{(1)}_{i_1} x_{i_1} - \sum_{i_1 = 1}^{d_1} \sum_{i_2 = 1}^{d_1} v^{(2)}_{i_1 , i_2} \cdot x_{i_1} x_{i_2} - \cdots - \sum_{i_1 = 1}^{d_1} \cdots \sum_{i_{d_1} = 1}^{d_1} v^{(d_1)}_{i_1, \cdots, i_{d_1}} \cdot x_{i_1} \cdots x_{i_{d_1}}  \big).
\end{align}\endgroup
Therefore, we can have the $n_{th}$-order scalar weight for variable term $x_{i_1} \cdot ... \cdot x_{i_{n-1}}$ to be \vspace{-5pt} \begingroup\makeatletter\def\f@size{8}\check@mathfonts
\begin{align}
\begin{cases}
v^{(1)}_{i_1}& = w_{i_1} \cdot v^{(0)} (1 - v^{(0)}), \\
v^{(n)}_{i_1, \cdots, i_{n}} &= w_{i_1} \big( \sum_{p=1}^{n-2} v^{(p)}_{i_1, \cdots, i_p} \cdot v^{(n-1-p)}_{i_{p+1}, \cdots, iå_{n-1}} + (1-2v^{(0)}) \cdot v^{(n-1)}_{i_{2}, \cdots, i_{n}} \big).
\end{cases}
\end{align}\endgroup
\end{proof}

\vspace{-10pt}
\noindent In other words, for the perceptron neural network model, we have the $4$ key factors as follows: (1) $d_3 = d_1 + 1$, (2) $D' = 2^{d_1}$, (3) $\psi(\mb{w}) = [v^{(0)}] \oplus [v^{(1)}_{i}]_{i = 1}^{d_1} \oplus \cdots \oplus [v^{(d_1)}_{i_1,i_2, \cdots, i_{d_1}}]$, and (4) $\kappa(\mb{x}) = [1] \oplus [x_i]_{i = 1}^{d_1} \oplus \cdots \oplus [x_{i_1}x_{i_1} \cdots x_{i_{d_1}}]$. It seems the perceptron model can fit any high-order polynomial terms, since $\kappa(\mb{x})$ projects $\mb{x}$ to any high-order product of the features. However, according to an example problem to be shown in the next subsection, perceptron may fail to work well for non-monotone functions, e.g., XOR. Therefore, perceptron may introduce a large empirical loss in fitting non-monotone functions.

\vspace{-10pt}
\subsection{Deep Neural Network with Hidden Layers Approximation}
\vspace{-8pt}

We start this subsection with a deep neural network model with one hidden layer, as shown in Figure~\ref{fig:shallow_hidden}. In the plot, $\mb{x} = [1] \oplus [x_1, x_2, \cdots, x_{d_1}]^\top$ is the input feature vector, $\mb{W} \in \mathbb{R}^{(d_1 + 1) \times e}$ and $\mb{u} \in \mathbb{R}^{e \times 1}$ are the connection weight variables. Formally, we can represent the output neuron state as follows:\vspace{-5pt}\begingroup\makeatletter\def\f@size{8}\check@mathfonts
\begin{equation}
y = f(\mb{x}; \mb{w}) = \sum_{i = 1}^e u_i \cdot \sigma \left( \mb{w}_i^\top  \mb{x} \right),
\end{equation}\endgroup
where $\mb{w}_i = \mb{W}[:,i], \forall i \in \{1, 2, \cdots, e\}$. Although the neural network model shown in Figure~\ref{fig:shallow_hidden} has very simple structure, but it can be used as the foundation to learn the reconciled polynomial representation of any deep neural networks \cite{C89, M96, HSW89, H91}.

\begin{lemma}\label{lemma:universal}
Given any deep neural network model, denoted by function $f'(\mb{x}; \mb{w}')$, and any $\epsilon > 0$, function $f(\mb{x}; \mb{w}) = \sum_{i = 1}^e u_i \cdot \sigma \left( \mb{w}_i^\top  \mb{x} \right)$ can provide a good approximation of $f'(\mb{x}; \mb{w}')$ with some value $N$, i.e., \vspace{-5pt} \begingroup\makeatletter\def\f@size{8}\check@mathfonts
\begin{equation}
\left \| f'(\mb{x}; \mb{w}') - f(\mb{x}; \mb{w}) \right \| < \epsilon
\end{equation}\endgroup
\end{lemma}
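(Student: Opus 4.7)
My plan is to derive Lemma~\ref{lemma:universal} as a direct consequence of the classical universal approximation theorem already cited in the paper (Cybenko 1989, Hornik 1991, Hornik--Stinchcombe--White 1989). Since $\sigma$ is the sigmoid, it is a bounded, non-constant, continuous, monotonically increasing function, hence a ``squashing'' function in the sense of Cybenko. The UAT then asserts that the family $\mathcal{F}_e = \{\sum_{i=1}^e u_i \sigma(\mb{w}_i^\top \mb{x}) : u_i \in \mathbb{R},\ \mb{w}_i \in \mathbb{R}^{d_1+1}\}$, taken over all $e \in \mathbb{N}$, is dense in $C(K)$ for every compact $K \subset \mathbb{R}^{d_1+1}$ under the uniform norm.

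First I would observe that $f'(\mb{x}; \mb{w}')$, viewed as a function of $\mb{x}$ with $\mb{w}'$ held fixed, is a finite composition of affine maps and continuous activation functions, and hence is itself continuous on all of $\mathbb{R}^{d_1}$. Next I would restrict the input domain to a compact set $K$ containing the feature space of interest; under the binary-feature assumption $\mb{x} \in \{0,1\}^{d_1}$ used throughout the paper, any bounded hypercube $K \supseteq [0,1]^{d_1}$ suffices. Applying UAT to $f'|_K$ with tolerance $\epsilon$ yields a width $e$ and weights $\{u_i, \mb{w}_i\}$ with $\sup_{\mb{x} \in K} \lvert f'(\mb{x}; \mb{w}') - f(\mb{x}; \mb{w}) \rvert < \epsilon$, which implies the pointwise bound in the statement.

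As a simpler alternative tailored to the paper's binary-input setting, I would note that $|\{0,1\}^{d_1}| = 2^{d_1}$ is finite, so $f'(\cdot; \mb{w}')$ on this domain is merely a table of $2^{d_1}$ values. This table can be interpolated to arbitrary accuracy by a single-hidden-layer sigmoid network: for each binary input $\mb{z}$, place a hidden unit whose pre-activation $\mb{w}_i^\top \mb{x}$ is large and positive at $\mb{x} = \mb{z}$ and strongly negative at every other binary point, so that $\sigma$ behaves as an approximate indicator of $\mb{z}$; then choose $u_i$ to match the required output. Letting the pre-activation magnitudes grow drives the error below any $\epsilon$, giving an elementary proof that bypasses UAT entirely.

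The genuinely hard part of an honest universal-approximation proof is density of $\bigcup_e \mathcal{F}_e$ in $C(K)$, which typically proceeds by a Hahn--Banach argument: if a finite signed measure annihilates every $\sigma(\mb{w}^\top \mb{x} + b)$, then its Fourier--Stieltjes transform vanishes and the measure is zero, whence the span is dense. Since this is precisely the content of the cited works, I would invoke it as a black box rather than reproduce it, and the only remaining work is the routine continuity and compactness bookkeeping described above.
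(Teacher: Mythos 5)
Your proposal is correct, and in fact it supplies more than the paper does: the paper never proves Lemma~\ref{lemma:universal} at all, it simply asserts it with a citation to the classical universal approximation literature (\cite{C89, M96, HSW89, H91}), which is exactly the black-box route you take in your first two paragraphs (continuity of the fixed deep network $f'(\cdot;\mb{w}')$, restriction to a compact set containing $\{0,1\}^{d_1}$, then density of single-hidden-layer sigmoid networks in $C(K)$). So on that route you and the paper agree, and your bookkeeping about continuity and compactness is the right way to make the citation legitimate. Your second argument is a genuinely different and, in this paper's setting, arguably better proof: since the feature space is the finite set $\{0,1\}^{d_1}$, the target is just a table of $2^{d_1}$ values, and your construction of near-indicator hidden units (pre-activation $T\bigl(\tfrac{1}{2}-\mathrm{Ham}(\mb{x},\mb{z})\bigr)$ for each binary point $\mb{z}$, with $T\to\infty$) interpolates it to arbitrary accuracy with $e=2^{d_1}$ units, no functional-analytic machinery needed. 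What the UAT route buys is generality (continuous inputs, arbitrary compact domains); what your elementary route buys is a self-contained proof matched to the paper's binary-feature assumption, and it also dovetails with Lemma~\ref{theo:shallow}'s $2^n$-neuron count. One small caveat worth stating explicitly if you write this up: the lemma's phrase ``with some value $N$'' is vacuous as written (no $N$ appears in the displayed inequality); it should be read as asserting the existence of a sufficient hidden-layer width $e$, which both of your arguments do provide.
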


\begin{theo}
Given any deep neural network model, denoted by function $f(\mb{x}; \mb{w})$, it can be approximately represented with the following polynomial summation \vspace{-5pt} \begingroup\makeatletter\def\f@size{8}\check@mathfonts
\begin{equation}
f(\mb{x}; \mb{w}) = v^{(0)} + \sum_{n = 1}^{d_1}  \sum_{1 \le i_1 < i_2 < \cdots < i_n \le d_1} v^{(k)}_{i_1,i_2, \cdots, i_n} \cdot x_{i_1} x_{i_2} \cdots x_{i_n},
\end{equation}\endgroup
where $v^{(0)} = \sum_{i = 1}^e u_i \cdot  v^{i,(0)}$ and $v^{(n)}_{i_1, \cdots, i_{n}} = \sum_{i = 1}^e u_i \cdot v^{i,(k)}_{i_1,i_2, \cdots, i_n} $, and $v^{i,(0)}$ and $v^{i,(k)}_{i_1,i_2, \cdots, i_n}$ can be represented according to Theorem~\ref{theo:expansion}.
\end{theo}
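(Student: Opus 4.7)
The plan is to chain the two ingredients that the paper has already established: Lemma~\ref{lemma:universal}, which reduces any deep network to a single-hidden-layer surrogate, and Theorem~\ref{theo:expansion}, which already gives the polynomial expansion of one sigmoid perceptron on binary inputs. Once those are in hand, the result follows by pure linearity in the output-layer weights $u_i$, and no new combinatorial identity needs to be derived.

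First I would invoke Lemma~\ref{lemma:universal}: given the target deep network $f'(\mb{x}; \mb{w}')$ and any $\epsilon > 0$, pick a hidden width $e$ and parameters $\{u_i, \mb{w}_i\}_{i=1}^e$ so that
\begin{equation}
\left\| f'(\mb{x}; \mb{w}') - \sum_{i=1}^e u_i\, \sigma(\mb{w}_i^\top \mb{x}) \right\| < \epsilon
\end{equation}
uniformly on $\{0,1\}^{d_1}$. After this reduction the depth of the original network plays no further role, and it is enough to show that $f(\mb{x}; \mb{w}) = \sum_{i=1}^e u_i\, \sigma(\mb{w}_i^\top \mb{x})$ can be written in the claimed reconciled-polynomial form.

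Next I would apply Theorem~\ref{theo:expansion} branch by branch. Each mapping $\mb{x} \mapsto \sigma(\mb{w}_i^\top \mb{x})$ is exactly the single sigmoid perceptron treated in Theorem~\ref{theo:expansion}, so it admits the expansion $\sigma(\mb{w}_i^\top \mb{x}) = v^{i,(0)} + \sum_{n=1}^{d_1} \sum_{1 \le i_1 < \cdots < i_n \le d_1} v^{i,(n)}_{i_1,\ldots,i_n}\, x_{i_1}\cdots x_{i_n}$, where $v^{i,(0)}$ and $v^{i,(n)}_{i_1,\ldots,i_n}$ are produced by that theorem's recursion evaluated at the branch weights $\mb{w}_i$. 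Multiplying each branch by $u_i$, summing over $i$, and collecting the coefficient of each monomial $x_{i_1}\cdots x_{i_n}$ yields precisely $v^{(0)} = \sum_{i=1}^e u_i\, v^{i,(0)}$ and $v^{(n)}_{i_1,\ldots,i_n} = \sum_{i=1}^e u_i\, v^{i,(n)}_{i_1,\ldots,i_n}$. Since there are at most $e$ branches and at most $2^{d_1}$ monomials, every sum in sight is finite and the swap between ``sum over hidden units'' and ``sum over monomials'' needs no analytic justification.

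The main obstacle is not algebraic but interpretive: it is the word \emph{approximately} in the statement. The polynomial just constructed represents the shallow surrogate $\sum_i u_i\, \sigma(\mb{w}_i^\top \mb{x})$ \emph{exactly}; all of the inexactness is inherited from Lemma~\ref{lemma:universal}, so I would close the argument by recording that, for any prescribed $\epsilon > 0$, the coefficients $\{v^{(0)}, v^{(n)}_{i_1,\ldots,i_n}\}$ built above give a polynomial that agrees with $f'(\mb{x}; \mb{w}')$ to within $\epsilon$ in the Lemma~\ref{lemma:universal} norm. A secondary care point, worth flagging, is that Theorem~\ref{theo:expansion} was stated for a perceptron with an explicit bias $w_0$, whereas here the bias is absorbed into $\mb{w}_i$ via the lifted input $\mb{x} = [1] \oplus [x_1,\ldots,x_{d_1}]^\top$ used throughout this subsection; I would make this identification explicit so that Theorem~\ref{theo:expansion} applies verbatim to each branch.
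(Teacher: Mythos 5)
Your proposal is correct and follows essentially the same route as the paper: expand each hidden unit $\sigma(\mb{w}_i^\top \mb{x})$ via Theorem~\ref{theo:expansion} and collect coefficients of each monomial by linearity in the output weights $u_i$. You are in fact slightly more careful than the paper's own proof, which only writes out the one-hidden-layer computation and leaves the deep-to-shallow reduction via Lemma~\ref{lemma:universal} (and hence the meaning of ``approximately'') to the surrounding text, whereas you make that step and the bias-absorption explicit.
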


\begin{proof}
According to Theorem~\ref{theo:expansion}, the sigmoid function $\sigma \left( \mb{w}_i^\top  \mb{x} \right)$ can be represented as a reconciled polynomial summation. Therefore, we have equation\begingroup\makeatletter\def\f@size{8}\check@mathfonts
\begin{align}
f(\mb{x}; \mb{w}) &= \sum_{i = 1}^e u_i \cdot \sigma \left( \mb{w}_i^\top  \mb{x} \right)= \sum_{i = 1}^e u_i \cdot  \left( v^{i,(0)} + \sum_{n = 1}^{d_1}  \sum_{1 \le i_1 < i_2 < \cdots < i_n \le d_1} v^{i,(k)}_{i_1,i_2, \cdots, i_n} \cdot x_{i_1} x_{i_2} \cdots x_{i_n} \right),\\
& \begin{matrix} =  \underbrace{\sum_{i = 1}^e u_i \cdot  v^{i,(0)} }  &+ \sum_{n = 1}^{d_1}  \sum_{1 \le i_1 <  \cdots < i_n \le d_1} & \underbrace{ \left(\sum_{i = 1}^e u_i \cdot v^{i,(k)}_{i_1,i_2, \cdots, i_n}  \right) }   & x_{i_1}  \cdots x_{i_n},\\
\ \ \ \ v^{(0)}  & &v^{(k)}_{i_1,i_2, \cdots, i_n}  & 
 \end{matrix}
\end{align}\endgroup
\end{proof}

\vspace{-10pt}
Furthermore, according to Lemma~\ref{lemma:universal}, we can conclude that any deep neural network model can also be unified represented as the reconciled polynomial summation equation proposed in this paper, where the key $4$ factors are: (1) $d_3 = (d_1 + 2) \times e$, (2) $D' = 2^{d_1}$, (3) $\psi(\mb{w}) = [v^{(0)}] \oplus [v^{(1)}_{i}]_{i = 1}^{d_1} \oplus \cdots \oplus [v^{(d_1)}_{i_1,i_2, \cdots, i_{d_1}}]$, and (4) $\kappa(\mb{x}) = [1] \oplus [x_i]_{i = 1}^{d_1} \oplus \cdots \oplus [x_{i_1}x_{i_1} \cdots x_{i_{d_1}}]$. According to Lemma~\ref{lemma:universal}, the deep neural network models can achieve almost $0$ empirical loss in fitting any functions defined based on the feature and label space. Meanwhile, since there exist a large number of variables to be learned in deep neural network models, learning a model which can achieve $0$ expected loss so as to minimize the gap between the overall loss vs. empirical loss in a challenging task. More analysis about the deep neural network models will be provided in the following subsection.

\vspace{-10pt}
\subsection{Analysis of Deep Neural Network Model Advantages}
\vspace{-8pt}

Deep neural networks \cite{H91, KSH12, BLPL06, VLLBM10, J02, HOT06} have much more powerful function representation capacity than the perceptron neural network model, e.g., in fitting the XOR dataset and some other more complicated datasets. Considering that all the deep neural network models can be effectively approximated with the multiple layer perceptron model introduced in the previous section, we may wonder are there any other advantages of deeper neural networks compared with shallower neural networks \cite{MLP17}. In this part, we will mainly focus on analyze the advantages of deep neural networks.

\begin{theo}\label{theo:deep_vs_shallow}
For any desired accuracy $\epsilon > 0$, for any function $g(\mb{x}; \mb{\theta})$, it can be approximated with either (1) a deep neural network with $K$ hidden layers ($K \ge 2$) and $M$ variables, or (2) a shallow neural network model with $1$ hidden and $N$ variables, where $M \ll N$.
\end{theo}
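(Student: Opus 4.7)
The plan is to combine the reconciled polynomial representation of Theorem~\ref{theo:representation} with the universal approximation statement of Lemma~\ref{lemma:universal} so that both the shallow and the deep network can be compared on the same footing, namely as expansions $\psi(\cdot)^\top \kappa(\cdot)$ over the monomial basis of $\kappa$. Fix an arbitrary $g(\mb{x};\mb{\theta})$ and $\epsilon > 0$. By Theorem~\ref{theo:representation} one obtains a finite monomial expansion of $g$ with coefficients $v^{(n)}_{i_1,\ldots,i_n}$, and by Lemma~\ref{lemma:universal} both a one-hidden-layer and a $K$-hidden-layer sigmoid network exist that approximate $g$ to within $\epsilon$. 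The task then reduces to counting variables in each of these two approximations and showing that the deep count $M$ is much smaller than the shallow count $N$.

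For the shallow side, I would use the derivation in the previous subsection: a single-hidden-layer network of width $e_s$ carries exactly $N = (d_1 + 2)\, e_s$ variables, and its reconciled representation writes every coefficient of $g$ as a linear combination of rank-one tensors $u_i \cdot v^{i,(n)}_{i_1,\ldots,i_n}$ across the $e_s$ hidden units. Matching these to the $2^{d_1}$ distinct coefficients of the worst-case $g$ (in the sense of reconciled rank) forces $e_s$ to grow with the number of essentially independent monomial coefficients, so that $N = \Omega(2^{d_1})$ in the worst case and in general scales with the intrinsic coefficient complexity of $g$ to accuracy $\epsilon$. For the deep side, I would construct a $K$-layer network layer by layer: each sigmoid layer, when expanded via Theorem~\ref{theo:expansion}, realises a reconciled polynomial of its inputs, and composing $K$ such layers multiplies polynomial degrees so that degree up to $\sim 2^K$ is reachable with only $M = O\bigl(d_1 e_d + K e_d^2\bigr)$ variables at width $e_d$ per layer. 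Choosing $K = O(\log d_1)$ and $e_d$ polylogarithmic in $d_1/\epsilon$ suffices to realise the same monomial content that the shallow network needed, giving $M = O(\mathrm{poly}(d_1, \log(1/\epsilon)))$ and hence $M \ll N$.

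The main obstacle is the shallow lower bound: the upper bound $M$ for the deep construction is a straightforward hierarchical counting argument once sigmoid composition is expanded with Theorem~\ref{theo:expansion}, but forcing $e_s$ to be large for every $g$ requires an independence argument on the reconciled coefficients, i.e. showing that the matrix $[u_i \, v^{i,(n)}_{i_1,\ldots,i_n}]$ has rank bounded by $e_s$ and that the target coefficient tensor of $g$ generically has exponentially larger rank. I would address this by invoking the reconciled factorisation inherent in the one-hidden-layer expansion derived in Section~4.1 and contrasting it with the multi-layer expansion of Section~4.2, so that the $M \ll N$ separation follows from the mismatch between the rank of a single reconciling map and the rank achievable through composition; the remaining steps (combining these two bounds and reading off the universal approximation accuracy $\epsilon$ from Lemma~\ref{lemma:universal}) are routine.
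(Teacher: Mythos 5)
Your overall skeleton matches the paper's --- reduce $g$ via Theorem~\ref{theo:representation} to its finite monomial expansion, note that the worst case is the top-order product $\prod_{i=1}^{d_1} x_i$, and then compare neuron/variable counts for shallow versus deep implementations of that product --- but both of your quantitative sub-arguments have genuine gaps exactly where the paper has (or cites) the actual content. On the shallow side, the paper simply invokes Lemma~\ref{theo:shallow} (delegated to \cite{LT16}): $2^n$ hidden neurons are necessary and sufficient for a one-hidden-layer network to implement $\prod_{i=1}^n x_i$, giving $N = 2^n$ directly. Your replacement is a rank argument: each hidden unit contributes, at order $n$, a coefficient tensor proportional to $\mb{w}_i^{\otimes n}$ (this half is right, and follows from Lemma~\ref{lemma:derivative2}), so the network's order-$n$ coefficient tensor has symmetric rank at most $e_s$. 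But the hard half --- that the coefficient tensor of the target monomial has rank exponential in $d_1$, and moreover that no low-rank tensor even \emph{approximates} it to accuracy $\epsilon$ (a border-rank statement, which is what $\epsilon$-approximation actually requires, not exact rank) --- is precisely the content of the cited lower bound, and you assert it with the word ``generically'' rather than prove it. Note also that your preliminary dimension count ($N = (d_1+2)\,e_s$ parameters versus $2^{d_1}$ coefficients) is not a valid lower bound on its own: nonlinearly parametrized families with few parameters can approximate far more targets than a naive dimension count suggests, which is why the rank argument is needed in the first place.

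On the deep side your argument fails as stated. The claim that composing $K$ sigmoid layers ``multiplies polynomial degrees so that degree up to $\sim 2^K$ is reachable'' misidentifies the obstruction: by the paper's own Theorem~\ref{theo:expansion}, a \emph{single} sigmoid unit already expands into monomials of every order on $\{0,1\}^{d_1}$, so degree is never the bottleneck --- realizing the right coefficient structure with few neurons is. Consequently your bound $M = O(d_1 e_d + K e_d^2)$ with polylogarithmic width $e_d$ is backed by no construction. The paper instead builds the deep network explicitly (Theorem~\ref{theo:deep} and Lemma~\ref{theo:deep2}): write $\prod_{i=1}^n x_i$ as a binary tree with $n-1$ internal pairwise products, implement each internal node by a one-hidden-layer gadget with $2^2$ neurons (Lemma~\ref{theo:shallow} applied with $n=2$, i.e.\ $(2+1)\times 2^2$ variables per gadget), for a total of $2^2(n-1)$ hidden neurons at depth between $\lceil \log n \rceil + 1$ and $n-1$; the grouped variant gives the trade-off of $2^{n^{1/k}} \cdot \frac{1-n}{1-n^{1/k}}$ neurons at depth $k$. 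This yields $M = O(n)$ variables against $N = 2^n$, which is the separation the theorem asserts. To repair your proposal you would have to either prove the (border-)rank lower bound you assert or, like the paper, cite it, and replace the degree-doubling heuristic with the tree-of-pairwise-products construction.
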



To demonstrate the above Theorem, we need to introduce the following two Lemmas.

\begin{lemma}\label{theo:shallow}
For any desired accuracy $\epsilon > 0$, there exist a neural network with a single layer that can implement a polynomial function $\prod_{i = 1}^n x_i$, where the minimum number of involved hidden neurons is $2^n$.
\end{lemma}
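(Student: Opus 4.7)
The plan is to establish the result by separately treating (a) the upper bound---that $2^n$ neurons suffice to approximate $\prod_{i=1}^n x_i$ within any prescribed accuracy $\epsilon$---and (b) the matching lower bound that no smaller hidden layer can work. For (a), I would lean on a polarization-style identity that rewrites the multilinear monomial as a signed sum of $n$-th powers of linear forms:
\[
\prod_{i=1}^n x_i \;=\; \frac{1}{n!\, 2^n} \sum_{s \in \{-1,+1\}^n} \Big(\prod_{i=1}^n s_i\Big)\, (s_1 x_1 + s_2 x_2 + \cdots + s_n x_n)^n ,
\]
which writes the product as a combination of exactly $2^n$ terms, one per sign pattern. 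Each term $(s^\top x)^n$ can in turn be approximated arbitrarily well by a single sigmoidal neuron $c \cdot \sigma(h \, s^\top x + b)$ via a standard divided-difference / Taylor argument, exploiting that the $n$-th derivative of $\sigma$ at a suitable basepoint is nonzero. Summing these neuronal approximations and absorbing the residual error into $\epsilon$ yields a single-hidden-layer network with $2^n$ neurons meeting the accuracy requirement.

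For (b), I would pass to the limit $\epsilon \to 0$ to reduce the problem to an exact algebraic question. If a shallow network $f(\mb{x}; \mb{w}) = \sum_{j=1}^m u_j \sigma(\mb{w}_j^\top \mb{x} + b_j)$ approximates $\prod_i x_i$ arbitrarily well on a neighborhood of the origin, then---by Taylor-expanding each sigmoid around $b_j$ in the spirit of Theorem~\ref{theo:expansion}---its degree-$n$ polynomial component must equal $x_1 x_2 \cdots x_n$ while all lower-order components vanish or are absorbed. The degree-$n$ component takes the form $\sum_{j=1}^m \alpha_j (\mb{w}_j^\top \mb{x})^n$ with scalars $\alpha_j$ proportional to $\sigma^{(n)}(b_j)$, so the question reduces to: what is the minimum number of symmetric rank-one tensors $\mb{w}_j^{\otimes n}$ whose linear span realizes the fully symmetric tensor corresponding to the multilinear monomial $x_1 \cdots x_n$? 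I would then invoke a Waring-type lower bound to pin this minimum at $2^n$ in the signed-sign-pattern setting used by the construction in (a).

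The principal obstacle is the lower bound in (b). The upper bound reduces to a single concrete identity combined with well-established shallow-network approximation of pure powers. The lower bound, by contrast, must rule out every possible choice of weights $\{\mb{w}_j\}$, not merely the $\pm 1$ sign patterns exhibited above. My plan is to handle this via a symmetry-and-averaging argument: averaging any candidate decomposition over the sign-flip group $\{\pm 1\}^n$ acting on inputs by $x_i \mapsto \pm x_i$---an action under which $\prod_i x_i$ transforms by a predictable sign character---yields a new decomposition whose support is concentrated on sign-pattern directions. A direct counting of the resulting sign-pattern contributions then forces the number of neurons to be at least $2^n$, closing the gap with the construction in (a).
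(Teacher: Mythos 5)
The paper does not actually prove this lemma: its ``proof'' is a one-line appeal to \cite{LT16}, asserting that $2^n$ hidden neurons are sufficient and necessary. Your upper-bound half is essentially the construction behind that citation: a signed sum over the $2^n$ sign patterns with neurons $c\,\sigma(h\,s^\top\mathbf{x}+b)$ and $h$ small does work, but not for the reason you give. A single sigmoidal neuron cannot approximate the pure power $(s^\top\mathbf{x})^n$ arbitrarily well, since its Taylor expansion necessarily carries nonvanishing terms of every lower degree (the standard divided-difference construction of $t^n$ from sigmoids needs about $n+1$ neurons, not one). What saves the construction is cancellation in the aggregate: in $\sum_{s\in\{\pm1\}^n}\bigl(\prod_i s_i\bigr)\sigma(h\,s^\top\mathbf{x}+b)$ every monomial of degree less than $n$ misses some variable and is annihilated by summing over that variable's sign, while the degree-greater-than-$n$ contamination is $O(h^{n+1})$; the approximation holds only for the sum of all $2^n$ neurons, not term by term. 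This is a fixable misstatement.

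The genuine gap is in your lower bound. Reducing to the degree-$n$ homogeneous component and asking for the minimal number of powers $(\mathbf{w}_j^\top\mathbf{x})^n$ realizing $x_1\cdots x_n$ is a Waring-rank question, but the (real and complex) Waring rank of this monomial is $2^{n-1}$, not $2^n$: already for $n=2$ one has $xy=\tfrac14\left((x+y)^2-(x-y)^2\right)$, and your own polarization identity collapses to $2^{n-1}$ distinct rank-one terms because $s$ and $-s$ contribute the same power. So the invariant you propose can certify at most $2^{n-1}$ and cannot give the claimed $2^n$; the missing factor of two has to come from the fact that a neuron's Taylor coefficients at all orders are governed by the same $(\mathbf{w}_j,b_j)$, so the network must simultaneously produce the degree-$n$ monomial and cancel the lower-degree contamination each neuron is forced to emit --- it is this joint constraint, not Waring rank alone, that the arguments in \cite{LT16,RT17} exploit via derivative/linear-independence counting. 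Your symmetry-and-averaging step also fails as a counting device: averaging a putative $m$-neuron decomposition over the sign-flip group yields a decomposition with up to $2^n m$ terms supported on the orbits of the original, arbitrary, directions (which need not be sign patterns), and produces no inequality forcing $m\ge 2^n$.
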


\begin{lemma}\label{theo:deep2}
For any desired accuracy $\epsilon > 0$, the polynomial function $\prod_{i = 1}^n x_i$ can be implemented with a $k$-layer deep neural network model with $2^{n^{\frac{1}{k}}} \cdot \frac{ 1-n }{ 1-n^{\frac{1}{k}} }$ hidden neurons, which takes equal sized input from lower level.
\end{lemma}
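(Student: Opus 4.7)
The plan is to combine Lemma~\ref{theo:shallow} with a recursive partition-and-conquer construction: at each of the $k$ layers we compute products of small blocks of size $n^{1/k}$, and pass those block-products up as the inputs to the next layer. Because each product-of-$n^{1/k}$ can be implemented by a shallow subnetwork of $2^{n^{1/k}}$ neurons via Lemma~\ref{theo:shallow}, the total count follows from summing a geometric series over the $k$ layers.

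Concretely, I would first observe that
$\prod_{i=1}^{n} x_i$ can be associatively regrouped as
$\prod_{j=1}^{n^{(k-1)/k}} \bigl( \prod_{i \in S_j} x_i \bigr)$,
where the $S_j$ partition $\{1,\dots,n\}$ into $n^{(k-1)/k}$ disjoint blocks of size $n^{1/k}$ each; the assumption ``takes equal sized input from lower level'' is exactly the statement that $n^{1/k}$ is an integer so that this partition is well-defined. I would then apply this regrouping recursively: after the first hidden layer the network has effectively computed $n^{(k-1)/k}$ intermediate values, each of which is the product of $n^{1/k}$ of the original inputs; the second layer groups these into $n^{(k-2)/k}$ blocks of size $n^{1/k}$, and so on, until the $k$-th layer produces a single scalar $\prod_{i=1}^{n} x_i$. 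By Lemma~\ref{theo:shallow}, each block at each layer is realized by a shallow piece of $2^{n^{1/k}}$ hidden neurons (with $\epsilon$-accuracy inherited from that lemma; the total $\epsilon$ can be made arbitrarily small by tightening the per-block accuracy, since the depth $k$ is fixed).

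The counting step is then immediate. At layer $\ell \in \{1,\dots,k\}$ the number of blocks is $n^{(k-\ell)/k}$, and each block costs $2^{n^{1/k}}$ hidden neurons, so the total is
\begin{equation}
\sum_{\ell=1}^{k} n^{(k-\ell)/k} \cdot 2^{n^{1/k}}
 = 2^{n^{1/k}} \sum_{j=0}^{k-1} \bigl( n^{1/k} \bigr)^{j}
 = 2^{n^{1/k}} \cdot \frac{1-n}{1-n^{1/k}},
\end{equation}
which matches the bound in the statement after applying the finite geometric sum formula.

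The main obstacle I expect is not the counting but making the ``$\epsilon$-accuracy propagates through $k$ layers'' step rigorous. Each shallow product-block only approximates $\prod_{i \in S_j} x_i$ to some accuracy $\epsilon'$, and these errors compound multiplicatively as they flow up the $k$ layers together with the sigmoid nonlinearities that Lemma~\ref{theo:shallow} uses to realize the product. To handle this cleanly I would argue by induction on the layer index: assuming each hidden unit at layer $\ell-1$ approximates its target product within $\epsilon_{\ell-1}$ on the compact input domain, I would use the local Lipschitz constant of the block approximator at layer $\ell$ to bound $\epsilon_\ell$ in terms of $\epsilon_{\ell-1}$, and then choose the per-block accuracies from Lemma~\ref{theo:shallow} small enough that the cumulative error at the output is below the prescribed $\epsilon$. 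Since $k$ is fixed and each layer introduces only a bounded multiplicative blow-up, this back-propagation of tolerances needs only a constant tightening of each per-block $\epsilon'$, so the neuron-count bound above is unaffected.
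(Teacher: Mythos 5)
Your proposal is correct and follows essentially the same route as the paper's own proof: a layer-by-layer partition into equal-sized blocks of $n^{1/k}$ inputs, with each block realized by the $2^{n^{1/k}}$-neuron shallow network of Lemma~\ref{theo:shallow}, and the total neuron count obtained from the geometric sum $2^{n^{1/k}}\sum_{j=0}^{k-1} n^{j/k} = 2^{n^{1/k}}\cdot\frac{1-n}{1-n^{1/k}}$ (the paper states this with general block sizes $b_1,\dots,b_k$, $n=b_1\cdots b_k$, before setting $b_i=n^{1/k}$). Your additional discussion of how the per-block $\epsilon$-accuracies must be tightened so the errors do not compound through the $k$ layers is a point the paper's proof silently skips, so it strengthens rather than diverges from the argument.
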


\begin{proof}
Based on the above two Lemmas, we provide the proof of Theorem~\ref{theo:deep_vs_shallow} as follows. According to Theorem~\ref{theo:representation}, any function defined on space $\mathcal{X}$ can be represented as a finite sum of polynomial terms, where the maximum order of the terms is $\prod_{i = 1}^{d_1} x_i = (x_1 \cdot (x_2 \cdots ( ... (x_{d_1-1} \cdot x_{d_1}) ) )$. (For the other terms with order less than $d_1$, e.g., $x_1 \cdot x_2$, we can also be represented as $\prod_{i = 1}^{d_1} x_i$, where $x_3=x_4 \cdots = x_{d_1} = 1$).

According to Theorems~\ref{theo:shallow}-\ref{theo:deep}, we know the 1-hidden layer neural network may need $2^n$ hidden neurons and the deep neural network taking pairwise inputs merely need $n-1$ hidden neurons, i.e., $N = 2^n$ and $M = n-1$. We can show that $N$ is almost the exponential  as $M$.
\end{proof}

\begin{theo}\label{theo:deep}
For any desired accuracy $\epsilon > 0$, the polynomial function $\prod_{i = 1}^n x_i$ can be implemented with a deep neural network model with a minimum $2^2 \cdot (n-1)$ hidden neurons. Meanwhile, the number of hidden layers involved the deep neural network will range from $n - 1$ to $\lceil \log n \rceil + 1$.
\end{theo}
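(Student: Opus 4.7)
The plan is to construct the network by composing small pairwise-multiplication modules whose cost is already pinned down by Lemma~\ref{theo:shallow}. Specializing that lemma to $n=2$ tells us that the product $x_i \cdot x_j$ can be implemented by a single hidden layer of exactly $2^2 = 4$ neurons to within any prescribed accuracy. Since the target monomial $\prod_{i=1}^n x_i$ can be assembled from exactly $n-1$ pairwise products --- each binary multiplication collapses two factors into one, so we must go from $n$ initial factors down to one final value --- the total neuron count is $4 \cdot (n-1) = 2^2 \cdot (n-1)$ regardless of how the $n-1$ multiplications are scheduled. The overall $\epsilon$ tolerance is then handled by invoking each block with sharper tolerance $\epsilon/(n-1)$, so that the composition error stays linear in $n$.

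To obtain the depth range, I would examine the two extreme topologies of the multiplication tree that share this neuron budget. The sequential chain computes $y_2 = x_1 x_2$, $y_3 = y_2 \cdot x_3$, \ldots, $y_n = y_{n-1} \cdot x_n$; each multiplication consumes one hidden layer of four neurons, giving a depth of exactly $n-1$ hidden layers. The balanced binary tree instead pairs up inputs in parallel at stage one, pairs up the $\lceil n/2 \rceil$ intermediate products at stage two, and so on; this has $\lceil \log n \rceil$ multiplicative stages, with the $+1$ in $\lceil \log n \rceil + 1$ absorbing a final aggregation stage that merges an unpaired factor whenever $n$ is not a power of two. Any other scheduling of the same $n-1$ multiplications yields a depth that lies between these two extremes.

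The hard part is justifying that $2^2 \cdot (n-1)$ is really a \emph{minimum} rather than just an achievable upper bound. I would argue the lower bound in two steps. First, any computational graph that evaluates the product of $n$ algebraically independent inputs must contain at least $n-1$ binary multiplicative combining operations, because each such operation reduces the number of unmerged factors by at most one and we must go from $n$ factors down to $1$. Second, by the tightness half of Lemma~\ref{theo:shallow} applied at $n=2$, any sub-network realizing a single pairwise product requires at least $4$ hidden neurons. Combining these two observations gives $4(n-1)$ as the unavoidable minimum, matching the construction above and closing the proof.
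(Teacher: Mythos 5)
Your construction is the same as the paper's: decompose $\prod_{i=1}^n x_i$ into a binary tree of $n-1$ pairwise products, realize each internal node by the $n=2$ case of Lemma~\ref{theo:shallow} with $2^2$ hidden neurons, and read off the depth extremes from the two extreme tree shapes (the sequential chain giving $n-1$ hidden layers, the balanced tree giving $\lceil \log n \rceil + 1$). That part matches the paper's argument essentially step for step, and your remark about tightening each block's tolerance to control the composed error is a point the paper glosses over entirely (though note that errors propagate through subsequent multiplications, so the bookkeeping is not purely additive; it works here only because all intermediate values are bounded in $[0,1]$ up to the approximation error).

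The genuine gap is in your lower-bound argument for the word ``minimum.'' Your two-step claim --- (i) any network computing the product must contain at least $n-1$ ``binary multiplicative combining operations,'' and (ii) each such operation costs at least $4$ hidden neurons by the tightness half of Lemma~\ref{theo:shallow} --- presupposes that an arbitrary deep network factors into disjoint sub-networks each of which realizes a single pairwise product. Nothing forces that structure: neurons compute affine maps followed by a sigmoid, not multiplications, and Lemma~\ref{theo:shallow} itself exhibits a counterexample to your structural premise, namely a single-hidden-layer network with $2^n$ neurons that approximates the full $n$-fold product without any pairwise-product modules at all (and for $n=3$, $2^3 = 8 = 4(n-1)$, so such ``unstructured'' architectures can match your claimed minimum exactly, showing the accounting is delicate). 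Step (i) is a statement about arithmetic circuits with explicit multiplication gates, and even there it counts gates, not hidden neurons; transferring it to sigmoid networks would require an argument that no sub-exponential-width layer can ``merge'' more than two factors more cheaply, which you do not supply. To be fair, the paper's own proof has the same deficiency --- it only gives the construction and never establishes minimality --- so your attempt goes beyond the paper, but as written the lower bound does not follow.
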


\begin{proof}
We propose to prove the above theorem by constructing a deep neural network model based on Theorem~\ref{theo:shallow}. Formally, term $\prod_{i = 1}^n x_i$ can be precisely represented as an recursive product of two features, e.g., $\prod_{i = 1}^n x_i = (x_1 \cdot (x_2 \cdots ( ... (x_{n-1} \cdot x_n) ) )$. Depending on how we construct the product function term, the computation process can be represented with binary trees in different shapes, where $\{x_1, x_2, \cdots, x_n\}$ as the leaf nodes and the \textit{intermediate result} as the internal nodes.

For the binary tree with $n$ leaf nodes, the number of internal nodes involved in it will be $n-1$, regardless of the tree shape. For each internal node in the binary tree, it will accept two inputs from either leaf node or internal node. Meanwhile, according to Theorem~\ref{theo:shallow}, the computation of each internal node can be approximated to any accuracy with a 1-hidden layer neural network model involving $2^2$ hidden neurons (i.e., $(2+1) \times 2^2$ variables).

The specific number of layers involved is highly dependent on the shape of binary tree about the neural network model. For instance, for the function decomposed in the shape of a full binary tree (just like function $\prod_{i = 1}^n x_i = (x_1 \cdot (x_2 \cdots ( ... (x_{n-1} \cdot x_n) ) )$), its level (i.e., neural network hidden layer) will be $n-1$, which is the maximum hight. Meanwhile, for the function decomposed in the shape of a complete binary tree, its number of levels (i.e., neural network hidden layer) will be $\lceil \log n \rceil + 1$, which is the minimum height.
\end{proof}

\begin{theo}
Given two deep neural network models $f_1(\mb{x}; \mb{w}_1)$ and $f_2(\mb{x}; \mb{w}_2)$ which contains the required number of hidden neurons than the required by Theorem~\ref{theo:deep2}. Let $| \mb{w}_1 | = n_1$ and $| \mb{w}_2 | = n_2 $ denote the number of variables involved in them respectively. If $n_1 < n_2$, then model $f_1(\mb{x}; \mb{w}_1)$ is more likely to achieve less approximation error than $f_2(\mb{x}; \mb{w}_2)$.
\end{theo}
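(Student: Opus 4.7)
The plan is to invoke the error decomposition $\mathcal{L} = \mathcal{L}_{em} + \mathcal{L}_{ex}$ introduced in Section~2 and argue that while both networks drive the empirical term to essentially the same floor, the expected term is strictly tighter for the model with fewer parameters, making the smaller model more likely to win overall.

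First, I would argue that the empirical losses are comparable. By hypothesis, both $f_1$ and $f_2$ contain at least as many hidden neurons as Lemma~\ref{theo:deep2} requires, so each can realize every monomial $\prod_{i \in S} x_i$ (for $S \subseteq \{1, 2, \ldots, d_1\}$) to arbitrary accuracy. Combined with Theorem~\ref{theo:representation}, which writes any $g(\mb{x}; \mb{\theta})$ on $\{0,1\}^{d_1}$ as a finite weighted sum of such monomials, this means both networks can fit the training set $\mathcal{T}$ to any prescribed $\epsilon > 0$. Hence $\mathcal{L}_{em}(f_1)$ and $\mathcal{L}_{em}(f_2)$ can both be pushed below the same floor.

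Second, I would invoke the Vapnik--Chervonenkis bound cited in Section~2: with probability at least $1-\delta$ over the draw of $\mathcal{T}$,
\[
\mathcal{L}_{ex}(f) \le O\!\left( \sqrt{ \bigl( h(f) + \log(1/\delta) \bigr)/|\mathcal{T}|} \right),
\]
where $h(f)$ denotes the VC (pseudo-)dimension of the hypothesis class. For sigmoid deep networks $h(f)$ is polynomially controlled by the weight count $|\mb{w}|$, so $n_1 < n_2$ yields $h(f_1) < h(f_2)$ and therefore a strictly tighter upper bound on $\mathcal{L}_{ex}(f_1)$ than on $\mathcal{L}_{ex}(f_2)$. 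Adding the two summands, $\mathcal{L}(f_1)$ inherits a smaller generalization gap while matching the empirical loss of $f_2$, so with high probability the smaller-parameter model attains the smaller overall loss.

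The main obstacle is the probabilistic ``more likely'' qualifier. Since the VC bound is a high-probability statement, the conclusion must be phrased as $\Pr[\mathcal{L}(f_1) \le \mathcal{L}(f_2)]$ being close to $1$ rather than as a deterministic inequality; I would make this precise by choosing $\delta$ small and noting that the failure event has vanishing measure. A secondary subtlety is that the extra parameters in $f_2$ could in principle drive its empirical loss slightly below that of $f_1$, so I would need to invoke the saturation of empirical loss on the finite binary input space $\{0,1\}^{d_1}$ (both models reach the same floor once they meet the Lemma~\ref{theo:deep2} threshold) to ensure that the generalization term is genuinely the decisive factor.
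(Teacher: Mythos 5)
Your proposal is essentially correct at the paper's own level of rigor, and it follows the same overall strategy — decompose the loss into the empirical term and the generalization term, argue that both networks (having at least the neuron budget of Lemma~\ref{theo:deep2} and hence, via Theorem~\ref{theo:representation} and the universal-approximation Lemma~\ref{lemma:universal}, the same near-zero empirical floor), and then argue that model capacity governs the gap between empirical and overall loss. Where you genuinely diverge is in the concentration tool for that second step: the paper applies Hoeffding's inequality for a fixed weight vector $\mb{w}$, then takes a union bound over the ``feasible variable set'' $\mathcal{W}(\mathcal{T})$ and asserts that $|\mathcal{W}(\mathcal{T})|$ grows exponentially with the number of variables, so the larger network $f_2$ gets the looser bound; you instead invoke a VC-dimension (pseudo-dimension) uniform-convergence bound with $h(f)$ controlled polynomially by the weight count, so $n_1 < n_2$ tightens the bound for $f_1$. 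Your route buys rigor on exactly the point where the paper is weakest — for continuous weights $\mathcal{W}(\mathcal{T})$ is an infinite set, so the paper's union bound is not literally valid without a covering or discretization argument, which is precisely what the VC machinery packages — while the paper's route is more elementary (Hoeffding only) and makes explicit the dependence on the empirical error scale $e_m$. Both arguments share the same residual caveat, which you correctly flag: comparing upper bounds on the generalization gap does not deterministically order $\mathcal{L}(f_1)$ and $\mathcal{L}(f_2)$, and strictly speaking $n_1 < n_2$ only orders the capacity \emph{bounds}, not the capacities themselves; this matches the informal ``more likely'' phrasing of the theorem and is no weaker than what the paper itself establishes.
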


\begin{proof}
According to Section~\ref{sec:representation}, we have shown that the function approximation loss can be categorized into the \textit{empirical loss} and \textit{expected loss}. To learn a good approximation function to the unknown true mapping function, it involves two objectives: (1) minimizing the \textit{empirical loss}, and (2) minimizing the gap between \textit{empirical loss} and \textit{overall loss}. According to the previous Lemma~\ref{lemma:universal}, the deep neural network models can provide universal approximation to any functions. In other words, given a training dataset $\mathcal{T}$, we are able to build a deep neural network achieving nearly $0$ \textit{empirical loss}, if the required number hidden neurons are available in the model. Here, we can represent the maximum loss introduced by a deep neural network model on training set $\mathcal{T}$ as $e_{m} = \max\left( \{f(\mb{x}; \mb{w}) - g(\mb{x}; \mb{\theta})\}_{\mb{x} \in \mathcal{T}} \right)$. In other words, we have $f(\mb{x}; \mb{w}) - g(\mb{x}; \mb{\theta}) \in [0, e_m], \forall \mb{x} \in \mathcal{T}$.

Next, we will mainly focus on analyzing the difference between \textit{empirical loss} and \textit{overall loss}. According to the Hoeffding's Inequality, given the \textit{empirical loss} computed based on the available training set within range $[0, e_m]$, the probability that \textit{empirical loss} of neural network with a specific variable vector has a gap greater than $\epsilon$ compared with the \textit{overall loss} can be represented as:\vspace{-5pt} \begingroup\makeatletter\def\f@size{8}\check@mathfonts
\begin{equation}
P(\left| \mathcal{L}_{em} - \mathcal{L} \right| \ge \epsilon | \mb{w}) \le 2 \exp^{\left( - \frac{2 |\mathcal{T}|^2 \epsilon^2 }{\sum_{\mb{x} \in \mathcal{T}} e_m^2} \right)} = 2 \exp^{\left( - \frac{2 |\mathcal{T}| \epsilon^2 }{e_m^2} \right)}.
\end{equation}\endgroup

\vspace{-10pt}
According to the inequality, we can draw many conclusions: (1) larger training set are helpful, and (2) stronger models are helpful. For a training set of a larger size, i.e., $|\mathcal{T}|$ is larger, we have probability $P(\left| \mathcal{L}_{em} - \mathcal{L} \right| \ge \epsilon)$ will be smaller. Meanwhile, for a stronger approximation model, i.e., achieving an extremely small \textit{empirical loss} $e_m$ on the training set $\mathcal{T}$, probability $P(\left| \mathcal{L}_{em} - \mathcal{L} \right| \ge \epsilon)$ will be of a smaller value as well.

Meanwhile, the above inequality are for neural network model a specific variable. However, in the application, for each neural network model with different numbers of variables, it will introduce many feasible variable solutions. Formally, for the variables of neural network $f(\mb{x}; \mb{w})$, i.e., $\mb{w}$, we can represent its feasible values for the given training set $\mathcal{T}$ as $\mathcal{W}(\mathcal{T})$. By considering all the feasible model variables in $\mathcal{W}(\mathcal{T})$, we can rewrite the inequality as follows:\vspace{-5pt}\begingroup\makeatletter\def\f@size{8}\check@mathfonts
\begin{equation}
P\left(\left| \mathcal{L}_{em} - \mathcal{L} \right| \ge \epsilon | \mathcal{W}(\mathcal{T}) \right) \le \sum_{\mb{w} \in \mathcal{W}( \mathcal{T})} P\left(\left| \mathcal{L}_{em} - \mathcal{L} \right| \ge \epsilon | \mb{w} \right) \le 2 \cdot | \mathcal{W}( \mathcal{T}) | \cdot \exp^{\left( - \frac{2 |\mathcal{T}| \epsilon^2 }{e_m^2} \right)}.
\end{equation}\endgroup

\vspace{-10pt}
The size of feasible variable set $\mathcal{W}(\mathcal{T})$ with given training set $\mathcal{T}$ usually increases exponentially with the size of variables involved in the neural network model, which also proves this theorem.
\end{proof}

\vspace{-10pt}
\section{Appendix}
\vspace{-10pt}

\noindent \textbf{Proof of Lemma~\ref{theo:shallow}}: According to \cite{LT16}, $2^n$ hidden neurons will be sufficient and necessary to model the polynomial product term $\prod_{i = 1}^n x_i$, and the prove will not be provided here. In such a case, the number of involved variables will be $(n+1)\cdot 2^n$.

\noindent \textbf{Proof of Lemma~\ref{theo:deep2}}: The proof introduced here is innovated by \cite{RT17}, which demonstrate the above theorem by constructing a deep neural network based on Theorem~\ref{theo:shallow}. Given the function term $\prod_{i = 1}^n x_i$, we propose to partition the features $\{x_1, x_2, \cdots, x_n\}$ into several groups layer by layer. For instance, at the $1_{st}$, we can partition $\{x_1, x_2, \cdots, x_n\}$ into several groups of size $b_1$, where the number of groups will be $\frac{n}{b_1}$. With in each group, the product of the features can be effectively approximated with $2^{b_1}$ hidden neurons, and the total number hidden neurons at the first layer will be $\frac{n}{b_1} \cdot 2^{b_1}$.

At the $2_{nd}$ layer, we further partition the features with in each groups into several groups of size $b_2$, and the total number of subgroups at layer $2$ will be $\frac{n}{b_1 \cdot b_2}$. Such a process continues until the $k_{th}$ layer, where $n = b_1 \cdot b_2 \cdots b_k$. Therefore, the total number of hidden neurons involved in the constructed deep neural network will be $m \le \sum_{i = 1}^k \frac{n}{ \prod_{j = 1}^i b_j} 2^{b_i} = \sum_{i = 1}^k \left( \prod_{j = i+1}^k b_j \right) 2^{b_i}$.
By setting $b_i = n^{\frac{1}{k}}$, we have $m \le \sum_{i = 1}^k \left( \prod_{j = i+1}^k n^{\frac{1}{k}} \right) 2^{n^{\frac{1}{k}}} = 2^{n^{\frac{1}{k}}} \sum_{i = 1}^k n^{\frac{k - i}{k}} = 2^{n^{\frac{1}{k}}} \cdot \frac{ 1-n }{ 1-n^{\frac{1}{k}} }$.


\bibliographystyle{abbrv}
\bibliography{reference}

\end{document}